\newcommand{\assref}[1]{\hyperref[#1]{\autoref*{ass:grl}\ref*{#1}}}
\def\Mrefl{\M_{\textrm{refl}}}
\titleformat{\section}{\large\bf\raggedright}{\thesection}{1em}{\MakeUppercase{#1}}
\titleformat{\subsection}{\normalsize\bf\raggedright}{\thesubsection}{1em}{\MakeUppercase{#1}}
\title{A Formal Solution to the Grain of Truth Problem}
\author{
	{\bf Jan Leike} \\
	Australian National University \\
	\href{mailto:jan.leike@anu.edu.au}{jan.leike@anu.edu.au}
	\And
	{\bf Jessica Taylor} \\
	Machine Intelligence Research Inst. \\
	\href{mailto:jessica@intelligence.org}{jessica@intelligence.org}
	\And
	{\bf Benya Fallenstein} \\
	Machine Intelligence Research Inst. \\
	\href{mailto:benya@intelligence.org}{benya@intelligence.org}
}
\begin{document}

\maketitle

\begin{abstract}%
A Bayesian agent acting in a multi-agent environment
learns to predict the other agents' policies
if its prior assigns positive probability to them
(in other words, its prior contains a \emph{grain of truth}).
Finding a reasonably large class of policies
that contains the Bayes-optimal policies with respect to this class
is known as the \emph{grain of truth problem}.
Only small classes are known to have a grain of truth
and the literature contains several related impossibility results.
In this paper we present a formal and general solution to
the full grain of truth problem:
we construct a class of policies that contains all computable policies
as well as
Bayes-optimal policies for every lower semicomputable prior over the class.
When the environment is unknown,
Bayes-optimal agents may fail to act optimally even asymptotically.
However, agents based on Thompson sampling
converge to play $\varepsilon$-Nash equilibria
in arbitrary unknown computable multi-agent environments.
While these results are purely theoretical,
we show that they can be computationally approximated
arbitrarily closely.
\end{abstract}

{\bf Keywords.}
General reinforcement learning,
multi-agent systems,
game theory,
self-reflection,
asymptotic optimality,
Nash equilibrium,
Thompson sampling,
AIXI.

\section{Introduction}
\label{sec:introduction}

Consider the general setup of multiple reinforcement learning agents
interacting sequentially
in a known environment with the goal to maximize discounted reward.%
\footnote{We mostly use the terminology of reinforcement learning.
For readers from game theory we provide a dictionary in
\autoref{tab:rl-game-theory-translation}.}
Each agent knows how the environment behaves,
but does not know the other agents' behavior.
The natural (Bayesian) approach would be to define a class of possible
policies that the other agents could adopt and take a prior over this class.
During the interaction, this prior gets updated to the posterior
as our agent learns the others' behavior.
Our agent then acts optimally with respect to this posterior belief.

\begin{table}[t]
\begin{center}
\begin{tabular}{p{0.445\columnwidth}p{0.445\columnwidth}}
\toprule
Reinforcement learning & Game theory \\
\midrule
stochastic policy & mixed strategy \\
deterministic policy & pure strategy \\
agent & player \\
multi-agent environment & infinite extensive-form game \\
reward & payoff/utility \\
(finite) history & history \\
infinite history & path of play \\
\bottomrule
\end{tabular}
\end{center}
\caption{Terminology dictionary between reinforcement learning and game theory.}
\label{tab:rl-game-theory-translation}
\end{table}

A famous result for infinitely repeated games states that
as long as each agent assigns positive prior probability
to the other agents' policies (a \emph{grain of truth})
and each agent acts Bayes-optimal,
then the agents converge to playing an $\varepsilon$-Nash equilibrium~\cite{KL:1993}.

As an example,
consider an infinitely repeated prisoners dilemma between two agents.
In every time step the payoff matrix is as follows,
where C means cooperate and D means defect.
\begin{center}
\begin{tabular}{l|cc}
  & C        & D \\
\hline
C & 3/4, 3/4 & 0, 1 \\
D & 1, 0     & 1/4, 1/4
\end{tabular}
\end{center}
Define the set of policies $\Pi := \{ \pi_\infty, \pi_0, \pi_1, \ldots \}$
where policy $\pi_t$ cooperates until time step $t$ or
the opponent defects (whatever happens first) and defects thereafter.
The Bayes-optimal behavior is to cooperate until the posterior belief that
the other agent defects in the time step after the next
is greater than some constant (depending on the discount function)
and then defect afterwards.
Therefore Bayes-optimal behavior leads to a policy from the set $\Pi$
(regardless of the prior).
If both agents are Bayes-optimal with respect to some prior,
they both have a grain of truth and therefore they converge to
a Nash equilibrium:
either they both cooperate forever or
after some finite time they both defect forever.
Alternating strategies like TitForTat
(cooperate first, then play the opponent's last action)
are not part of the policy class $\Pi$,
and adding them to the class breaks the grain of truth property:
the Bayes-optimal behavior is no longer in the class.
This is rather typical;
a Bayesian agent usually needs to be
more powerful than its environment~\cite{LH:2015computability}.

Until now, classes that admit a grain of truth
were known only for small toy examples such as
the iterated prisoner's dilemma above~\cite[Ch.~7.3]{SLB:2009}.
The quest to find a large class admitting a grain of truth is known as
the \emph{grain of truth problem}~\cite[Q.~5j]{Hutter:2009open}.
The literature contains
several impossibility results on the grain of truth problem%
~\cite{FY:2001impossibility,Nachbar:1997,Nachbar:2005}
that identify properties that cannot be simultaneously
satisfied for classes that allow a grain of truth.

In this paper we present a formal solution to multi-agent reinforcement learning
and the grain of truth problem
in the general setting (\autoref{sec:a-grain-of-truth}).
We assume that our multi-agent environment is computable,
but it does not need to be stationary/Markov, ergodic, or finite-state~\cite{Hutter:2005}.
Our class of policies is large enough to contain all computable (stochastic) policies,
as well as all relevant Bayes-optimal policies.
At the same time, our class is small enough to be limit computable.
This is important because
it allows our result to be computationally approximated.

In \autoref{sec:multi-agent-environments}
we consider the setting where the multi-agent environment is
unknown to the agents and has to be learned
in addition to the other agents' behavior.
A Bayes-optimal agent may not learn to act optimally
in unknown multi-agent environments \emph{even though it has a grain of truth}.
This effect occurs in non-recoverable environments where
taking one wrong action can mean a permanent loss of future value.
In this case, a Bayes-optimal agent avoids taking these dangerous actions
and therefore will not explore enough
to wash out the prior's bias~\cite{LH:2015priors}.
Therefore, Bayesian agents are not \emph{asymptotically optimal}, i.e.,
they do not always learn to act optimally~\cite{Orseau:2013}.

However, asymptotic optimality is achieved by Thompson sampling
because the inherent randomness of Thompson sampling
leads to enough exploration to learn the entire environment class%
~\cite{LLOH:2016Thompson}.
This leads to our main result:
if all agents use Thompson sampling over our class of multi-agent environments,
then for every $\varepsilon > 0$
they converge to an $\varepsilon$-Nash equilibrium asymptotically.

The central idea to our construction is based on
\emph{reflective oracles}~\cite{FST:2015,FTC:2015reflection}.
Reflective oracles are probabilistic oracles
similar to halting oracles that
answer whether the probability that
a given probabilistic Turing machine $T$ outputs $1$
is higher than a given rational number $p$.
The oracles are reflective in the sense that the machine $T$
may itself query the oracle,
so the oracle has to answer queries about itself.
This invites issues caused by self-referential liar paradoxes of the form
``if the oracle says that I return $1$ with probability $> 1/2$,
then return $0$, else return $1$.''
Reflective oracles avoid these issues by being allowed to randomize if
the machines do not halt or the rational number
is \emph{exactly} the probability to output $1$.
We introduce reflective oracles formally in \autoref{sec:reflective-oracles}
and prove that there is a limit computable reflective oracle.

\section{Reflective Oracles}
\label{sec:reflective-oracles}

\subsection{Preliminaries}
\label{ssec:preliminaries}

Let $\X$ denote a finite set called \emph{alphabet}.
The set $\X^* := \bigcup_{n=0}^\infty \X^n$ is
the set of all finite strings over the alphabet $\X$,
the set $\X^\infty$ is
the set of all infinite strings
over the alphabet $\X$, and
the set $\X^\sharp := \X^* \cup \X^\infty$ is their union.
The empty string is denoted by $\epsilon$, not to be confused
with the small positive real number $\varepsilon$.
Given a string $x \in \X^\sharp$, we denote its length by $|x|$.
For a (finite or infinite) string $x$ of length $\geq k$,
we denote with $x_{1:k}$ the first $k$ characters of $x$,
and with $x_{<k}$ the first $k - 1$ characters of $x$.
The notation $x_{1:\infty}$ stresses that $x$ is an infinite string.

A function $f: \X^* \to \mathbb{R}$ is
\emph{lower semicomputable} iff
the set $\{ (x, p) \in \X^* \times \mathbb{Q} \mid f(x) > p \}$
is recursively enumerable.
The function $f$ is \emph{computable} iff
both $f$ and $-f$ are lower semicomputable.
Finally, the function $f$ is \emph{limit computable} iff
there is a computable function $\phi$ such that
\[
\lim_{k \to \infty} \phi(x, k) = f(x).
\]
The program $\phi$ that limit computes $f$
can be thought of as an \emph{anytime algorithm} for $f$:
we can stop $\phi$ at any time $k$ and get a preliminary answer.
If the program $\phi$ ran long enough (which we do not know),
this preliminary answer will be close to the correct one.

We use $\Delta\mathcal{Y}$ to denote
the set of probability distributions over $\mathcal{Y}$.
A list of notation can be found in \autoref{app:notation}.

\subsection{Definition}
\label{ssec:reflective-oracles-def}

A \emph{semimeasure} over the alphabet $\X$ is
a function $\nu: \X^* \to [0,1]$ such that
\begin{inparaenum}[(i)]
\item $\nu(\epsilon) \leq 1$, and
\item $\nu(x) \geq \sum_{a \in \X} \nu(xa)$ for all $x \in \X^*$.
\end{inparaenum}
In the terminology of measure theory,
semimeasures are probability measures on the probability space
$\X^\sharp = \X^* \cup X^\infty$
whose $\sigma$-algebra is generated by the \emph{cylinder sets}
$\Gamma_x := \{ xz \mid z \in \X^\sharp \}$
~\cite[Ch.\ 4.2]{LV:2008}.
We call a semimeasure (probability) a \emph{measure} iff
equalities hold in (i) and (ii) for all $x \in \X^*$.

Next, we connect semimeasures to Turing machines.
The literature uses \emph{monotone Turing machines},
which naturally correspond to
lower semicomputable semimeasures~\cite[Sec.\ 4.5.2]{LV:2008}
that describe the distribution that arises when piping fair coin flips
into the monotone machine.
Here we take a different route.

A \emph{probabilistic Turing machine} is a Turing machine that
has access to an unlimited number of uniformly random coin flips.
Let $\mathcal{T}$ denote the set of all probabilistic Turing machines
that take some input in $\X^*$ and may query an oracle (formally defined below).
We take a Turing machine $T \in \mathcal{T}$ to correspond to
a semimeasure $\lambda_T$ where
$\lambda_T(a \mid x)$ is the probability that
$T$ outputs $a \in \X$ when given $x \in \X^*$ as input.
The value of $\lambda_T(x)$ is then given by the chain rule
\begin{equation}\label{eq:chain-rule}
\lambda_T(x) := \prod_{k=1}^{|x|} \lambda_T(x_k \mid x_{<k}).
\end{equation}
Thus $\mathcal{T}$ gives rise to the set of semimeasures $\M$ where
the \emph{conditionals} $\lambda(a \mid x)$ are lower semicomputable.
In contrast, the literature typically considers semimeasures
whose \emph{joint} probability \eqref{eq:chain-rule} is lower semicomputable.
This set $\M$ contains all computable measures.
However, $\M$ is a proper subset of the set of all lower semicomputable semimeasures
because the product \eqref{eq:chain-rule} is lower semicomputale,
but there are some lower semicomputable semimeasures whose conditional
is not lower semicomputable~\cite[Thm.~6]{LH:2015computability2}.

In the following we assume that our alphabet is binary,
i.e., $\X := \{ 0, 1 \}$.

\begin{definition}[Oracle]
\label{def:oracle}
An \emph{oracle} is a function
$O: \mathcal{T} \times \{ 0, 1 \}^* \times \mathbb{Q} \to \Delta \{ 0, 1 \}$.
\end{definition}

Oracles are understood to be probabilistic:
they randomly return $0$ or $1$.
Let $T^O$ denote the machine $T \in \mathcal{T}$ when run with the oracle $O$,
and let $\lambda_T^O$ denote the semimeasure induced by $T^O$.
This means that drawing from $\lambda_T^O$ involves two sources of randomness:
one from the distribution induced by the probabilistic Turing machine $T$
and one from the oracle's answers.

The intended semantics of an oracle are that it takes
a \emph{query} $(T, x, p)$ and returns $1$ if the machine $T^O$ outputs $1$
on input $x$ with probability greater than $p$ when run with the oracle $O$,
i.e., when $\lambda^O_T(1 \mid x) > p$.
Furthermore, the oracle returns $0$ if the machine $T^O$ outputs $1$
on input $x$ with probability less than $p$ when run with the oracle $O$,
i.e., when $\lambda^O_T(1 \mid x) < p$.
To fulfill this,
the oracle $O$ has to make statements about itself,
since the machine $T$ from the query may again query $O$.
Therefore we call oracles of this kind \emph{reflective oracles}.
This has to be defined very carefully
to avoid the obvious diagonalization issues that are caused by programs
that ask the oracle about themselves.
We impose the following self-consistency constraint.

\begin{definition}[Reflective Oracle]
\label{def:reflective-oracle}
An oracle $O$ is \emph{reflective} iff
for all queries
$(T, x, p) \in \mathcal{T} \times \{ 0, 1 \}^* \times \mathbb{Q}$,
\begin{enumerate}[(i)]
\item $\lambda_T^O(1 \mid x) > p$ implies $O(T, x, p) = 1$, and
\item $\lambda_T^O(0 \mid x) > 1 - p$ implies $O(T, x, p) = 0$.
\end{enumerate}
\end{definition}

If $p$ under- or overshoots the true probability of $\lambda_T^O(\,\cdot \mid x)$,
then the oracle must reveal this information.
However, in the critical case when $p = \lambda_T^O(1 \mid x)$,
the oracle is allowed to return anything and may randomize its result.
Furthermore, since $T$ might not output any symbol,
it is possible that $\lambda_T^O(0 \mid x) + \lambda_T^O(1 \mid x) < 1$.
In this case the oracle can reassign the non-halting probability mass
to $0$, $1$, or randomize; see \autoref{fig:reflective-oracle}.

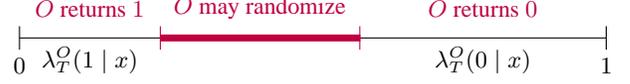
\begin{figure}[t]
\begin{center}
\small
\begin{tikzpicture}[scale=0.78]
\draw (0,0) to (10, 0);
\draw (0, .2) to (0, -.2) node[below] {$0$};
\draw (10, .2) to (10, -.2) node[below] {$1$};

\node[below] at (7.9, 0) {$\lambda_T^O(0 \mid x)$};
\node[below] at (1.2, 0) {$\lambda_T^O(1 \mid x)$};

\draw[purple] (5.8, .2) to (5.8, -.2);
\draw[purple] (2.4, .2) to (2.4, -.2);
\filldraw[purple] (2.4, .05) -- (5.8, .05) -- (5.8, -.05) -- (2.4, -.05);
\node[purple] at (1.2, 0.5) {$O$ returns $1$};
\node[purple] at (4.1, 0.5) {$O$ may randomize};
\node[purple] at (7.9, 0.5) {$O$ returns $0$};
\end{tikzpicture}
\end{center}
\caption{
Answer options of a reflective oracle $O$ for the query $(T, x, p)$;
the rational $p \in [0, 1]$ falls into one of the three regions above.
The values of $\lambda_T^O(0 \mid x)$ and $\lambda_T^O(1 \mid x)$ are depicted
as the length of the line segment under which they are written.
}\label{fig:reflective-oracle}
\end{figure}

\begin{example}[Reflective Oracles and Diagonalization]
\label{ex:diagonalization-for-reflective-oracles}
Let $T \in \mathcal{T}$ be a probabilistic Turing machine that
outputs $1 - O(T, \epsilon, 1/2)$
($T$ can know its own source code by quining~\cite[Thm.~27]{Kleene:1952}).
In other words, $T$ queries the oracle about whether it is more likely
to output $1$ or $0$, and then does whichever the oracle says is less likely.
In this case we can use an oracle $O(T, \epsilon, 1/2) := 1/2$
(answer $0$ or $1$ with equal probability),
which implies $\lambda_T^O(1 \mid \epsilon) = \lambda_T^O(0 \mid \epsilon) = 1/2$,
so the conditions of \autoref{def:reflective-oracle} are satisfied.
In fact, for this machine $T$ we must have
$O(T, \epsilon, 1/2) = 1/2$ for all reflective oracles $O$.
\end{example}

The following theorem establishes that reflective oracles exist.

\begin{theorem}[{\cite[App.\ B]{FTC:2015reflectionx}}]
\label{thm:existence-reflective-oracles}
There is a reflective oracle.
\end{theorem}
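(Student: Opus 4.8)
The plan is to prove existence of a reflective oracle by a fixed-point argument. I view an oracle abstractly as a point in a compact space and the reflectivity conditions as defining a closed set of fixed points of an appropriate correspondence; then I invoke a topological fixed-point theorem. Concretely, an oracle assigns to each query $(T,x,p)$ a number $O(T,x,p)\in[0,1]$ (the probability of returning $1$), so the space of oracles is the product $[0,1]^{\mathcal{T}\times\{0,1\}^*\times\mathbb{Q}}$, which by Tychonoff's theorem is compact (and convex, and Hausdorff) in the product topology. The reflectivity constraints in \autoref{def:reflective-oracle} are conditions relating the output $O(T,x,p)$ to the induced semimeasure values $\lambda_T^O(1\mid x)$ and $\lambda_T^O(0\mid x)$, which themselves depend on $O$.

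First I would make precise how $\lambda_T^O$ depends on $O$. Since $T^O$ is a probabilistic machine that makes finitely many oracle calls before emitting each output symbol (with probability approaching the true conditional as we let it run), $\lambda_T^O(1\mid x)$ is a limit of expressions built from finitely many coordinates of $O$; the key regularity fact I need is that the map $O\mapsto\lambda_T^O(a\mid x)$ is continuous in the product topology, or at least that I can control its behaviour well enough to apply a fixed-point theorem with a closed-graph correspondence. Next I would define, for each query, the set of reflective-admissible responses: given the values $\lambda_T^O(1\mid x)$ and $\lambda_T^O(0\mid x)$, condition (i) forces the answer to be $1$ when $\lambda_T^O(1\mid x)>p$, condition (ii) forces $0$ when $\lambda_T^O(0\mid x)>1-p$, and otherwise (the critical/non-halting region illustrated in \autoref{fig:reflective-oracle}) any value in $[0,1]$ is permitted. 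This defines a correspondence $F$ from the oracle space to itself, and a reflective oracle is exactly a fixed point $O\in F(O)$.

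The plan is then to verify the hypotheses of the Kakutani--Fan--Glicksberg fixed-point theorem for the correspondence $F$ on the compact convex space of oracles: each value $F(O)$ is nonempty (the full product of the admissible response sets, each an interval, hence nonempty and convex) and convex, and $F$ has a closed graph. Closedness of the graph is where the continuity of $O\mapsto\lambda_T^O$ feeds in: I would show that if $O^{(n)}\to O$ and $O'^{(n)}\in F(O^{(n)})$ with $O'^{(n)}\to O'$, then the forcing inequalities pass to the limit, so $O'\in F(O)$. The strict inequalities in \autoref{def:reflective-oracle} are important here: because reflectivity only constrains the answer when $p$ is strictly above or below the true probability, the admissible-set correspondence is upper hemicontinuous rather than collapsing at the boundary, which is exactly what makes the graph closed. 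A fixed point of $F$ is then a reflective oracle.

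The main obstacle I anticipate is the continuity (or closed-graph) analysis of $O\mapsto\lambda_T^O$. The difficulty is self-referential: $\lambda_T^O$ is defined by running $T$ with oracle $O$, and $T$ may query $O$ about itself, so the dependence is not a simple finite composition but a limit over unboundedly long computations with nested oracle calls. Establishing that this limit varies appropriately with $O$---and in particular that the non-halting probability mass is handled consistently with the third, randomization region---requires care; I would likely approach it by writing $\lambda_T^O(a\mid x)$ as a supremum over finite-time truncations of the computation, each of which depends continuously (indeed only on finitely many coordinates) on $O$, and then arguing that the truncation limit interacts correctly with the product topology. The payoff is that once continuity is in hand, the existence of a reflective oracle reduces to a standard application of a fixed-point theorem on a compact convex set.
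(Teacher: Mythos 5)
Your proposal is essentially the same argument as the paper's (cited) proof: the paper delegates this statement to \cite[App.~B]{FTC:2015reflectionx}, which establishes existence exactly by viewing oracles as points of the compact convex product space $[0,1]^{\mathcal{T}\times\{0,1\}^*\times\mathbb{Q}}$ and applying an infinite-dimensional Kakutani-type fixed-point theorem to the correspondence of reflectively admissible answers---nonconstructive and reliant on the axiom of choice (via Tychonoff), just as the paper remarks. Your observations that only the strict inequalities in \autoref{def:reflective-oracle} need to pass to the limit, and that $\lambda_T^O(a\mid x)$ is a supremum of finite-time truncations each depending on finitely many coordinates (hence lower semicontinuous, which suffices for the closed graph), are precisely the points that make that argument go through.
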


\begin{definition}[Reflective-Oracle-Computable]
\label{def:reflective-oracle-computable}
A semimeasure is called \emph{reflective-oracle-computable} iff
it is computable on a probabilistic Turing machine
with access to a reflective oracle.
\end{definition}

For any probabilistic Turing machine $T \in \mathcal{T}$
we can complete the semimeasure $\lambda_T^O(\,\cdot \mid x)$
into a reflective-oracle-computable measure
$\overline\lambda_T^O(\,\cdot \mid x)$:
Using the oracle $O$ and a binary search on the parameter $p$
we search for the crossover point $p$ where $O(T, x, p)$
goes from returning $1$ to returning $0$.
The limit point $p^* \in \mathbb{R}$ of the binary search is random
since the oracle's answers may be random.
But the main point is that the expectation of $p^*$ exists,
so $\overline\lambda_T^O(1 \mid x) = \mathbb{E}[p^*]
= 1 - \overline\lambda_T^O(0 \mid x)$ for all $x \in \X^*$.
Hence $\overline\lambda_T^O$ is a measure.
Moreover, if the oracle is reflective,
then $\overline\lambda_T^O(x) \geq \lambda_T^O(x)$ for all $x \in \X^*$.
In this sense the oracle $O$ can be viewed as
a way of `completing' all semimeasures $\lambda_T^O$
to measures by arbitrarily assigning the non-halting probability mass.
If the oracle $O$ is reflective this is consistent in the sense that
Turing machines who run other Turing machines will be completed in the same way.
This is especially important for a universal machine
that runs all other Turing machines
to induce a Solomonoff-style distribution.

\subsection{A Limit Computable Reflective Oracle}
\label{ssec:lc-reflective-oracle}

The proof of \autoref{thm:existence-reflective-oracles}
given in \cite[App.\ B]{FTC:2015reflectionx}
is nonconstructive and uses the axiom of choice.
In \autoref{ssec:lc-reflective-oracle-proof}
we give a constructive proof for the existence of reflective oracles
and show that there is one that is limit computable.

\begin{theorem}[A Limit Computable Reflective Oracle]
\label{thm:lc-reflective-oracle}
There is a reflective oracle that is limit computable.
\end{theorem}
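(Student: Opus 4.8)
The plan is to construct the limit computable reflective oracle as the limit of a sequence of approximations obtained by a computable iterative refinement procedure. The existence theorem (Theorem 2) guarantees a reflective oracle exists, but nonconstructively; to get limit computability I need an explicit computable process $\phi(\text{query}, k)$ whose limit as $k \to \infty$ is a genuine reflective oracle. First I would recast the reflective oracle conditions as a fixed point problem. An oracle $O$ is essentially a point in the product space of answers $[0,1]$ indexed by all queries $(T, x, p)$, and the reflectivity constraints of Definition 4 carve out the set of consistent oracles. The key observation is that whether an oracle satisfies the constraints depends only on the induced conditional probabilities $\lambda_T^O(1 \mid x)$, which are themselves determined by $O$ through the machines' executions.

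Next I would set up a monotone approximation scheme. I would enumerate all queries and, at stage $k$, maintain intervals $[\underline{o}_k(q), \overline{o}_k(q)]$ bracketing the eventual oracle answer for each query $q$. The idea is to run every machine $T$ for $k$ steps using the current approximate oracle, obtaining computable lower bounds on $\lambda_T^O(1 \mid x)$ and $\lambda_T^O(0 \mid x)$ (these probabilities are lower semicomputable precisely because the conditionals in $\M$ are). Whenever the accumulated lower bound on $\lambda_T^O(1 \mid x)$ exceeds $p$, reflectivity forces the answer to $1$, so I raise $\underline{o}_k(T,x,p)$ to $1$; symmetrically for the $0$-forcing condition I lower $\overline{o}_k(T,x,p)$ to $0$. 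The central difficulty is that these forcing events are interdependent in a circular way: forcing one oracle value changes the induced probabilities, which may force further values, and there is no monotone order in which to resolve them. I would handle this by iterating the refinement to a simultaneous fixed point at each stage and arguing that the brackets shrink appropriately.

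The main obstacle, which I anticipate being the crux of the proof, is showing that this computable process actually converges, i.e., that the upper and lower brackets meet in the limit for almost every query and that the resulting limit is genuinely reflective rather than merely consistent with finitely many constraints. The subtlety is the critical case $p = \lambda_T^O(1 \mid x)$, where the oracle is free to randomize: here the brackets need not collapse to a point, but convergence of the approximating sequence must still hold in the limit sense. I would prove convergence by identifying the limit oracle with the \emph{least} reflective oracle under a suitable partial order, or by a compactness-plus-monotonicity argument showing that the nested interval process stabilizes. A clean way to organize this is to define the limit oracle via $O(q) := \lim_{k} \phi(q, k)$ where $\phi$ performs the staged refinement, and then verify the two reflectivity implications directly: if $\lambda_T^O(1 \mid x) > p$ strictly, then for large enough $k$ the finite-step lower bound already exceeds $p$, so the forcing fires and drives the approximation to $1$; the strictly-$0$ case is symmetric.

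Finally I would assemble the pieces: establish that $\phi$ is a total computable function (each stage does only finitely much work over finitely many machines run for finitely many steps), that the pointwise limit exists, and that the limit is a reflective oracle by checking Definition 4. The verification that the limit satisfies reflectivity leans on the strictness in the hypotheses of Definition 4(i) and (ii), which is exactly what makes the forcing conditions semidecidable and hence detectable in finite time. I expect the bulk of the technical care to go into the simultaneous fixed-point update at each stage and into proving that no query's bracket oscillates forever, ensuring $\lim_k \phi(q,k)$ is well defined.
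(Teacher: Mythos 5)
There is a genuine gap, and it sits exactly where you say you expect the crux to be: the ``simultaneous fixed point at each stage'' and the convergence of the brackets. The difficulty is that the reflectivity constraints are \emph{not monotone} in the oracle values, so the tools you reach for (a least reflective oracle under a pointwise order, or a compactness-plus-monotonicity argument for nested intervals) do not apply. The diagonalization machine of Example~1 already shows this: for $T$ outputting $1 - O(T,\epsilon,1/2)$ the induced probability $\lambda_T^O(1 \mid \epsilon)$ is a \emph{decreasing} function of $O(T,\epsilon,1/2)$, so raising an oracle value can force other values down rather than up, and there is no Knaster--Tarski-style least fixed point. Concretely, your forcing rules never fire on that query (with bracket $[0,1]$ the computable lower bounds on both $\lambda_T^O(1\mid\epsilon)$ and $\lambda_T^O(0\mid\epsilon)$ stay at $0$), the bracket never shrinks, yet the only consistent value is exactly $1/2$; picking anything else breaks reflectivity, and your proposal gives no selection rule that is guaranteed consistent across the coupled queries. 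This is not a corner case one can wave away as ``the oracle may randomize here'': the unforced choices feed back into $\lambda_{T'}^{O}$ for other machines $T'$ and can render later forced constraints unsatisfiable, which is precisely why existence itself requires a genuine fixed-point theorem rather than an iterative closure.

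The paper escapes this with a qualitatively different device that your plan lacks: a backtracking search over a finitely-branching tree of \emph{discretized} $k$-partial oracles (values on a $2^{-k}$-grid over the first $k$ queries). Three ingredients make it work: (a) whether a $k$-partial oracle is $k$-partially reflective is checkable in \emph{finite} time, because machines are run for only $k$ steps with deliberately underestimated oracle answers; (b) the nonconstructive existence theorem (Theorem~5) is still used --- it guarantees the tree has an infinite branch, so the search never dies; and (c) since each level has finitely many nodes, backtracking to any level happens only finitely often, so the output stabilizes, which is exactly what limit computability requires. Your proposal tries to compute the oracle by forward propagation of forced values and therefore needs a convergence argument that, for the reasons above, is not available; if you want to salvage the interval idea you would still need to add a search-with-backtracking (or an equivalent effective compactness argument over a discretized space) to handle the unforced, mutually coupled coordinates.
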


This theorem has the immediate consequence
that reflective oracles cannot be used as halting oracles.
At first, this result may seem surprising:
according to the definition of reflective oracles,
they make concrete statements about the output of probabilistic Turing machines.
However, the fact that the oracles may randomize some of the time
actually removes enough information such that halting can no longer be decided
from the oracle output.

\begin{corollary}[Reflective Oracles are not Halting Oracles]
\label{cor:reflective-oracles-not-halting-oracles}
There is no probabilistic Turing machine $T$ such that
for every prefix program $p$ and every reflective oracle $O$,
we have that $\lambda_T^O(1 \mid p) > 1/2$ if $p$ halts and
$\lambda_T^O(1 \mid p) < 1/2$ otherwise.
\end{corollary}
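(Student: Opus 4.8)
The plan is to derive \autoref{cor:reflective-oracles-not-halting-oracles} as a direct consequence of the existence of a limit computable reflective oracle (\autoref{thm:lc-reflective-oracle}) by a diagonalization against the hypothetical halting-detection machine. Suppose for contradiction that such a machine $T$ exists, so that for every reflective oracle $O$ and every prefix program $p$, the value $\lambda_T^O(1 \mid p)$ lies strictly above $1/2$ exactly when $p$ halts and strictly below $1/2$ otherwise. The key observation is that, because $T$ works against \emph{every} reflective oracle, we may in particular instantiate it with the specific limit computable oracle $O$ guaranteed by \autoref{thm:lc-reflective-oracle}.

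Next I would combine this with the fact that a limit computable reflective oracle can be queried by an ordinary (oracle-free) limit computation. Concretely, since $O$ is limit computable and $\lambda_T^O$ is reflective-oracle-computable, the quantity $\lambda_T^O(1 \mid p)$ is itself limit computable as a function of $p$: one runs the anytime approximation of $O$ together with the probabilistic machine $T$ and refines the estimate over time. The threshold comparison against the fixed rational $1/2$ then lets us semi-decide halting. Precisely, under the contradiction hypothesis $p$ halts iff $\lambda_T^O(1 \mid p) > 1/2$, and because this probability is bounded strictly away from $1/2$ in both cases, a sufficiently good limit approximation eventually commits to the correct side of $1/2$. This would make the halting problem limit decidable with a definite stabilizing answer, i.e. decidable at the level $\Delta_2$ with a single-sided stable verdict, which contradicts the undecidability of the halting problem.

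I would therefore structure the argument as follows. First, assume the machine $T$ exists. Second, fix the limit computable reflective oracle $O$ from \autoref{thm:lc-reflective-oracle} and note that, by hypothesis, $T$ satisfies the stated separation property with respect to this particular $O$. Third, argue that the map $p \mapsto \lambda_T^O(1 \mid p)$ is limit computable, using that both $T$'s coin-flip distribution and $O$'s answers are limit computable and that the induced semimeasure is obtained by the chain rule \eqref{eq:chain-rule}. Fourth, observe that the strict gap around $1/2$ means the limiting approximation eventually lands on the correct side, yielding a computable-in-the-limit procedure whose verdict stabilizes to ``halts'' or ``does not halt.'' Fifth, conclude that this decides the halting problem, a contradiction.

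The main obstacle I anticipate is the third step: justifying carefully that $\lambda_T^O(1 \mid p)$ is genuinely limit computable and that the stabilization is robust. The subtlety is that $O$'s randomization makes $\lambda_T^O$ a distribution averaged over the oracle's coin flips, so one must confirm that the anytime approximation of $O$ lifts to an anytime approximation of the \emph{expected} output probability $\lambda_T^O(1 \mid p)$ rather than of some single sampled run. This requires that approximating $O$ in the limit lets us approximate the expectation defining $\overline\lambda_T^O$ and hence $\lambda_T^O(1 \mid p)$ to arbitrary precision, and that the strict inequalities in the hypothesis give a fixed margin that a finite stage of the approximation can certify. Once that limit-computability-of-the-probability claim is nailed down, the reduction to deciding halting in the limit, and therefore the contradiction, is routine.
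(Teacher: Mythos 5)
There is a genuine gap, and it is fatal to the argument as structured: your final step concludes a contradiction from the claim that the halting problem would become ``limit decidable with a definite stabilizing answer, i.e.\ decidable at the level $\Delta_2$.'' But the halting problem \emph{is} limit decidable --- it is $\Sigma_1^0 \subseteq \Delta_2^0$, and the trivial anytime algorithm $\phi(p,k) := 1$ if $p$ halts within $k$ steps, else $0$, already stabilizes to the correct verdict. So establishing limit decidability of halting contradicts nothing. Moreover, even the intermediate claim is weaker than you suggest: the hypothesis only gives $\lambda_T^O(1 \mid p) \neq 1/2$ for each $p$, not a uniform margin, and the limit computation of $O$ comes with no computable modulus of convergence, so no finite stage can \emph{certify} which side of $1/2$ the true value lies on. You therefore cannot upgrade the limit procedure to an actual decision procedure, which is what a contradiction with undecidability would require.

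The paper's proof takes a different route that supplies exactly the missing strength. First, it exploits reflectivity of $O$ directly: under the contradiction hypothesis, the single oracle query $O(T, p, 1/2)$ returns $1$ or $0$ \emph{deterministically and in one step} according to whether $p$ halts, so $T$ together with $O$ yields a genuine (finite-time, exact) halting oracle. Second, since $O$ is limit computable, it is computable to any prescribed precision \emph{relative to that halting oracle} (the Shoenfield limit lemma direction you do not use). This converts the merely-in-the-limit knowledge of $O$ into finite-time knowledge, at which point a liar-paradox machine $T'$ --- which computes $O(T', \epsilon, 1/2)$ to precision $1/3$ and outputs the opposite of what the oracle predicts --- contradicts the reflectivity of $O$. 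If you want to salvage your write-up, you must replace the appeal to ``undecidability of halting in the limit'' with this two-stage bootstrapping: use reflectivity to get an exact halting oracle, use the halting oracle to compute $O$ itself, and then diagonalize against $O$ rather than against the halting problem.
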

\begin{proof}
Assume there was such a machine $T$ and
let $O$ be the limit computable oracle from \autoref{thm:lc-reflective-oracle}.
Since $O$ is reflective we can turn $T$ into a deterministic halting oracle
by calling $O(T, p, 1/2)$ which deterministically returns
$1$ if $p$ halts and $0$ otherwise.
Since $O$ is limit computable,
we can finitely compute the output of $O$ on any query
to arbitrary finite precision using our deterministic halting oracle.
We construct a probabilistic Turing machine $T'$ that uses our halting oracle
to compute (rather than query) the oracle $O$
on $(T', \epsilon, 1/2)$ to a precision of $1/3$ in finite time.
If $O(T', \epsilon, 1/2) \pm 1/3 > 1/2$, the machine $T'$ outputs $0$,
otherwise $T'$ outputs $1$.
Since our halting oracle is entirely deterministic,
the output of $T'$ is entirely deterministic as well (and $T'$ always halts),
so
$\lambda_{T'}^O(0 \mid \epsilon) = 1$ or $\lambda_{T'}^O(1 \mid \epsilon) = 1$.
Therefore $O(T', \epsilon, 1/2) = 1$ or $O(T', \epsilon, 1/2) = 0$
because $O$ is reflective.
A precision of $1/3$ is enough to tell them apart,
hence $T'$ returns $0$ if $O(T', \epsilon, 1/2) = 1$ and
$T'$ returns $1$ if $O(T', \epsilon, 1/2) = 0$.
This is a contradiction.
\end{proof}

A similar argument can also be used to show that
reflective oracles are not computable.

\protected\def\dirtyhack{\ref*{thm:lc-reflective-oracle}}
\subsection{Proof of Theorem~\texorpdfstring{\dirtyhack}{6}}
\label{ssec:lc-reflective-oracle-proof}

The idea for the proof of \autoref{thm:lc-reflective-oracle} is to
construct an algorithm
that outputs an infinite series of \emph{partial oracles}
converging to a reflective oracle in the limit.

The set of queries is countable,
so we can assume that we have some computable enumeration of it:
\[
  \mathcal{T} \times \{ 0, 1 \}^* \times \mathbb{Q}
=: \{ q_1, q_2, \ldots \}
\]

\begin{definition}[$k$-Partial Oracle]
\label{def:k-partial-oracle}
A \emph{$k$-partial oracle} $\tilde O$ is function from the first $k$ queries
to the multiples of $2^{-k}$ in $[0, 1]$:
\[
\tilde O: \{ q_1, q_2, \ldots, q_k \} \to
\{ n 2^{-k} \mid 0 \leq n \leq 2^k \}
\]
\end{definition}

\begin{definition}[Approximating an Oracle]
\label{def:approx-oracle}
A $k$-partial oracle $\tilde O$ \emph{approximates} an oracle $O$ iff
$|O(q_i) - \tilde O(q_i)| \leq 2^{-k-1}$ for all $i \leq k$.
\end{definition}

Let $k \in \mathbb{N}$, let $\tilde O$ be a $k$-partial oracle, and
let $T \in \mathcal{T}$ be an oracle machine.
The machine $T^{\tilde O}$ that we get when
we run $T$ with the $k$-partial oracle $\tilde O$ is defined as follows
(this is with slight abuse of notation
since $k$ is taken to be understood implicitly).
\begin{enumerate}[1.]
\item Run $T$ for at most $k$ steps.
\item If $T$ calls the oracle on $q_i$ for $i \leq k$,
	\begin{enumerate}[(a)]
	\item return $1$ with probability $\tilde O(q_i) - 2^{-k-1}$,
	\item return $0$ with probability $1 - \tilde O(q_i) - 2^{-k-1}$, and
	\item halt otherwise.
	\end{enumerate}
\item If $T$ calls the oracle on $q_j$ for $j > k$, halt.
\end{enumerate}
Furthermore, we define $\lambda_T^{\tilde O}$ analogously to $\lambda_T^O$
as the distribution generated by the machine $T^{\tilde O}$.

\begin{lemma}\label{lem:approximating-an-oracle}
If a $k$-partial oracle $\tilde O$ approximates a reflective oracle $O$,
then $\lambda_T^O(1 \mid x) \geq \lambda_T^{\tilde O}(1 \mid x)$ and
$\lambda_T^O(0 \mid x) \geq \lambda_T^{\tilde O}(0 \mid x)$ for all
$x \in \{ 0, 1 \}^*$ and all $T \in \mathcal{T}$.
\end{lemma}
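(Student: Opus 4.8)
The plan is to prove the two inequalities by a single coupling argument, treating the input $x$ and the output symbol $a \in \{0,1\}$ as fixed; establishing $\lambda_T^O(a \mid x) \geq \lambda_T^{\tilde O}(a \mid x)$ for each $a$ simultaneously yields both claims. I identify the oracle value $O(q_i) \in \Delta \{0,1\}$ with the probability $O(q_i) \in [0,1]$ that it returns $1$, so that the approximation hypothesis reads $\tilde O(q_i) - 2^{-k-1} \leq O(q_i) \leq \tilde O(q_i) + 2^{-k-1}$ for every $i \leq k$. The guiding intuition is that $T^{\tilde O}$ is a \emph{lossy} version of $T^O$: it can fail to produce a symbol for three reasons (it exceeds the step bound $k$, it queries some $q_j$ with $j > k$, or an in-range oracle call returns the ``halt'' answer with probability $2^{-k}$), and each of these only ever suppresses output that $T^O$ would still produce. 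A coupling that forces the two runs to agree whenever $T^{\tilde O}$ does output will then deliver the domination.

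Concretely, I would build a joint probability space on which both runs use the same internal coin flips of $T$, and couple the answers of each oracle call independently. At a call on a query $q_i$ with $i \leq k$, I place the event ``$\tilde O$ returns $1$'' (probability $\tilde O(q_i) - 2^{-k-1}$) inside the event ``$O$ returns $1$'' (probability $O(q_i)$), and the event ``$\tilde O$ returns $0$'' (probability $1 - \tilde O(q_i) - 2^{-k-1}$) inside ``$O$ returns $0$'' (probability $1 - O(q_i)$). This is a legitimate coupling precisely because the approximation hypothesis supplies the two inequalities $\tilde O(q_i) - 2^{-k-1} \leq O(q_i)$ and $1 - \tilde O(q_i) - 2^{-k-1} \leq 1 - O(q_i)$, while the leftover ``halt'' mass $2^{-k}$ of the partial oracle may be mapped anywhere, since its two return probabilities sum to only $1 - 2^{-k} \leq 1$. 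Calls on queries $q_j$ with $j > k$ require no coupling, since $T^{\tilde O}$ simply halts there.

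Under this coupling I would argue that the event $\{T^{\tilde O} \text{ outputs } a\}$ is contained in $\{T^O \text{ outputs } a\}$. Fix a realization of the shared randomness on which $T^{\tilde O}$ outputs $a$. Then along this run the step bound was never exceeded, no out-of-range query was made, and every in-range oracle call returned $0$ or $1$ rather than halting; by the coupling, $O$ returned the identical bit on each of those calls. Since $T^O$ obeys no step or query restriction and receives exactly the same coin flips and oracle answers, it traces the identical computation and emits the same symbol $a$. Taking probabilities over the coupled space gives $\lambda_T^{\tilde O}(a \mid x) = \mathbb{P}(T^{\tilde O} \text{ outputs } a) \leq \mathbb{P}(T^O \text{ outputs } a) = \lambda_T^O(a \mid x)$, as required for both $a = 1$ and $a = 0$.

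The main point to get right is the two-sided validity of the oracle coupling: the slack $2^{-k-1}$ on each of the two return branches is exactly what is needed so that the halting mass $2^{-k}$ absorbs the discrepancy in the $1$-direction and the $0$-direction at once, and it is worth verifying that coupling every oracle invocation independently stays consistent even when $T$ repeats a query, since the oracle answer is drawn afresh on each call. Notably, reflectivity of $O$ is not actually used in this argument; only the approximation bound $|O(q_i) - \tilde O(q_i)| \leq 2^{-k-1}$ enters, so the statement in fact holds for any oracle that $\tilde O$ approximates.
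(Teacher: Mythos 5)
Your proof is correct and takes essentially the same route as the paper's: the paper argues informally that running $T$ with $\tilde O$ ``can only lose probability mass'' because aborted runs produce no output and each in-range call returns a given bit with probability at most that under $O$, and your explicit coupling is simply a rigorous rendering of exactly this branch-by-branch domination. Your closing observation that reflectivity of $O$ is never used---only the approximation bound $|O(q_i) - \tilde O(q_i)| \leq 2^{-k-1}$---is also accurate; the paper's own proof likewise never invokes reflectivity.
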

\begin{proof}
This follows from the definition of $T^{\tilde O}$:
when running $T$ with $\tilde O$ instead of $O$,
we can only lose probability mass.
If $T$ makes calls whose index is $> k$ or runs for more than $k$ steps,
then the execution is aborted
and no further output is generated.
If $T$ makes calls whose index $i \leq k$, then
$\tilde O(q_i) - 2^{-k-1} \leq O(q_i)$ since $\tilde O$ approximates $O$.
Therefore the return of the call $q_i$ is underestimated as well.
\end{proof}

\begin{definition}[$k$-Partially Reflective]
\label{def:k-partially-reflective}
A $k$-partial oracle $\tilde O$ is \emph{$k$-partially reflective} iff
for the first $k$ queries $(T, x, p)$
\begin{itemize}
\item $\lambda_T^{\tilde O}(1 \mid x) > p$ implies $\tilde O(T, x, p) = 1$, and
\item $\lambda_T^{\tilde O}(0 \mid x) > 1 - p$ implies $\tilde O(T, x, p) = 0$.
\end{itemize}
\end{definition}

It is important to note that we can check whether a $k$-partial oracle
is $k$-partially reflective in finite time by running all machines
$T$ from the first $k$ queries for $k$ steps and tallying up the probabilities
to compute $\lambda_T^{\tilde O}$.

\begin{lemma}\label{lem:partially-reflective}
If $O$ is a reflective oracle and
$\tilde O$ is a $k$-partial oracle that approximates $O$, then
$\tilde O$ is $k$-partially reflective.
\end{lemma}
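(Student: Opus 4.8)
The plan is to verify the two defining implications of $k$-partial reflectivity directly, using \autoref{lem:approximating-an-oracle} to push the hypotheses from $\tilde O$ up to the genuine reflective oracle $O$, then invoking the reflectivity of $O$ to obtain an \emph{exact} value ($0$ or $1$), and finally exploiting the coarse discretization of $\tilde O$'s range to transfer that exact value back down to $\tilde O$. I would fix one of the first $k$ queries $(T, x, p)$, so that $\tilde O(T, x, p)$ is defined and lands on the grid $\{ n 2^{-k} \mid 0 \leq n \leq 2^k \}$, and treat the two implications symmetrically.

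For the first implication, suppose $\lambda_T^{\tilde O}(1 \mid x) > p$. By \autoref{lem:approximating-an-oracle} the monotonicity gives $\lambda_T^O(1 \mid x) \geq \lambda_T^{\tilde O}(1 \mid x) > p$, so reflectivity of $O$ yields $O(T, x, p) = 1$. Since $\tilde O$ approximates $O$, we have $\tilde O(T, x, p) \geq O(T, x, p) - 2^{-k-1} = 1 - 2^{-k-1}$. But $\tilde O(T, x, p)$ is a multiple of $2^{-k}$, and the largest such multiple strictly below $1$ is $1 - 2^{-k} < 1 - 2^{-k-1}$; hence the only admissible value is $\tilde O(T, x, p) = 1$, as required.

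The second implication is completely analogous: assuming $\lambda_T^{\tilde O}(0 \mid x) > 1 - p$ gives $\lambda_T^O(0 \mid x) > 1 - p$ via \autoref{lem:approximating-an-oracle}, whence $O(T, x, p) = 0$ by reflectivity, and the approximation bound forces $\tilde O(T, x, p) \leq O(T, x, p) + 2^{-k-1} = 2^{-k-1}$. Since the smallest positive multiple of $2^{-k}$ is $2^{-k} > 2^{-k-1}$, this can only hold if $\tilde O(T, x, p) = 0$.

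The only subtle point — and hence the step I would be most careful about — is this final quantization argument. The approximation guarantee $|O(q_i) - \tilde O(q_i)| \leq 2^{-k-1}$ is merely an inequality, so on its own it would not pin down an exact value; what makes it collapse to equality is that $\tilde O$ takes values only on the grid of multiples of $2^{-k}$, and the half-step slack $2^{-k-1}$ is strictly too small to reach the neighbouring grid point. Everything else is a routine chaining of the monotonicity lemma with the two reflectivity conditions of \autoref{def:reflective-oracle}.
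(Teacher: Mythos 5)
Your proof is correct and follows essentially the same route as the paper's: chain \autoref{lem:approximating-an-oracle} to lift the hypothesis to $O$, invoke reflectivity to get $O(T,x,p)\in\{0,1\}$ exactly, and use the $2^{-k}$-grid together with the $2^{-k-1}$ approximation bound to force $\tilde O(T,x,p)$ to the same value. The quantization step you flag as the subtle point is precisely the step the paper relies on as well.
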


\autoref{lem:partially-reflective} only holds
because we use semimeasures whose conditionals are lower semicomputable.

\begin{proof}
Assuming $\lambda_T^{\tilde O}(1 \mid x) > p$ we get
from \autoref{lem:approximating-an-oracle} that
$
     \lambda_T^O(1 \mid x)
\geq \lambda_T^{\tilde O}(1 \mid x)
>    p
$.
Thus $O(T, x, p) = 1$ because $O$ is reflective.
Since $\tilde O$ approximates $O$,
we get $1 = O(T, x, p) \leq \tilde O(T, x, p) + 2^{-k-1}$, and
since $\tilde O$ assigns values in a $2^{-k}$-grid,
it follows that $\tilde O(T, x, p) = 1$.
The second implication is proved analogously.
\end{proof}

\begin{definition}[Extending Partial Oracles]
\label{def:extending-partial-oracles}
A $k + 1$-partial oracle $\tilde O'$ \emph{extends}
a $k$-partial oracle $\tilde O$ iff
$|\tilde O(q_i) - \tilde O'(q_i)| \leq 2^{-k-1}$ for all $i \leq k$.
\end{definition}

\begin{lemma}
\label{lem:infinite-partial-oracles}
There is an infinite sequence of partial oracles $(\tilde O_k)_{k \in \mathbb{N}}$
such that for each $k$,
$\tilde O_k$ is a $k$-partially reflective $k$-partial oracle
and $\tilde O_{k+1}$ extends $\tilde O_k$.
\end{lemma}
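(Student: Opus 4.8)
The plan is to build the sequence explicitly by rounding a fixed reflective oracle, and to note that the same fact repackages through König's lemma when one later wants a computable version. By \autoref{thm:existence-reflective-oracles} there is a reflective oracle $O$. For each $k$ I would define the $k$-partial oracle $\tilde O_k$ by setting $\tilde O_k(q_i)$, for $i \le k$, to a nearest multiple of $2^{-k}$ to the real number $O(q_i)$ (fixing any tie-breaking rule). Since the grid of multiples of $2^{-k}$ has mesh $2^{-k}$, every point of $[0,1]$ lies within $2^{-k-1}$ of such a multiple, so $|O(q_i) - \tilde O_k(q_i)| \le 2^{-k-1}$ for all $i \le k$; that is, $\tilde O_k$ approximates $O$ in the sense of \autoref{def:approx-oracle}.

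The first thing to check is $k$-partial reflectivity of each $\tilde O_k$, and this is exactly \autoref{lem:partially-reflective}: because $O$ is reflective and $\tilde O_k$ approximates it, $\tilde O_k$ is $k$-partially reflective. The second, and only genuinely delicate, point is the extension property of \autoref{def:extending-partial-oracles}, namely $|\tilde O_k(q_i) - \tilde O_{k+1}(q_i)| \le 2^{-k-1}$ for $i \le k$. Here a bare triangle inequality is too weak: it yields only $|\tilde O_k(q_i) - O(q_i)| + |O(q_i) - \tilde O_{k+1}(q_i)| \le 2^{-k-1} + 2^{-k-2}$, which exceeds $2^{-k-1}$. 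The fix is a grid-alignment observation: both $\tilde O_k(q_i)$ and $\tilde O_{k+1}(q_i)$ are integer multiples of $2^{-k-1}$ (a multiple of $2^{-k}$ is \emph{a fortiori} a multiple of $2^{-k-1}$), so their difference is itself a multiple of $2^{-k-1}$; being also strictly below $2^{-k-1} + 2^{-k-2} < 2 \cdot 2^{-k-1}$, that difference must be either $0$ or $2^{-k-1}$, hence at most $2^{-k-1}$. Thus $\tilde O_{k+1}$ extends $\tilde O_k$, and $(\tilde O_k)_{k \in \mathbb{N}}$ is the required sequence.

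I expect this grid-alignment step to be the main obstacle, precisely because the naive estimate fails by a constant factor and one has to exploit that the two roundings live on nested dyadic grids rather than treating them as arbitrary reals. Everything else is bookkeeping: $\tilde O_k$ is manifestly a $k$-partial oracle in the sense of \autoref{def:k-partial-oracle}, and the $k$-partial reflectivity of \autoref{def:k-partially-reflective} together with the extension clause are verified above.

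If instead one prefers an argument that does not fix a single oracle $O$ in advance — which is the form useful for the limit-computable construction of \autoref{thm:lc-reflective-oracle} — I would phrase the same content via König's lemma. Take the tree whose depth-$k$ nodes are the finite chains $(\tilde O_1, \dots, \tilde O_k)$ of $j$-partially reflective $j$-partial oracles with each $\tilde O_{j+1}$ extending $\tilde O_j$, ordered by the prefix relation. This tree is finitely branching because for each $k$ there are only finitely many $k$-partial oracles, and the rounding construction above exhibits a chain of every finite length, so the tree is infinite. König's lemma then produces an infinite path, which is again the desired sequence.
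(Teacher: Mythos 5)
Your proof is correct and follows the same route as the paper: fix a reflective oracle $O$ via \autoref{thm:existence-reflective-oracles}, round it to the nearest point of the $2^{-k}$-grid on the first $k$ queries, and invoke \autoref{lem:partially-reflective} for $k$-partial reflectivity. The paper asserts the extension property merely ``by construction''; your grid-alignment observation --- the difference of the two roundings is an integer multiple of $2^{-k-1}$ bounded by $2^{-k-1}+2^{-k-2} < 2\cdot 2^{-k-1}$, hence equal to $0$ or $2^{-k-1}$ and in particular at most $2^{-k-1}$ --- correctly supplies the justification that a bare triangle inequality does not, and is a worthwhile addition rather than a deviation.
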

\begin{proof}
By \autoref{thm:existence-reflective-oracles}
there is a reflective oracle $O$.
For every $k$, there is a canonical $k$-partial oracle $\tilde O_k$ that approximates $O$:
restrict $O$ to the first $k$ queries and for any such query $q$
pick the value in the $2^{-k}$-grid which is closest to $O(q)$.
By construction, $\tilde O_{k+1}$ extends $\tilde O_k$
and by \autoref{lem:partially-reflective}, each $\tilde O_k$ is $k$-partially reflective.
\end{proof}

\begin{lemma}\label{lem:extending-oracles}
If the $k+1$-partial oracle $\tilde O_{k+1}$ extends
the $k$-partial oracle $\tilde O_k$, then
$\lambda_T^{\tilde O_{k+1}}(1 \mid x) \geq \lambda_T^{\tilde O_k}(1 \mid x)$ and
$\lambda_T^{\tilde O_{k+1}}(0 \mid x) \geq \lambda_T^{\tilde O_k}(0 \mid x)$
for all $x \in \{ 0, 1 \}^*$ and all $T \in \mathcal{T}$.
\end{lemma}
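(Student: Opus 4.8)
The plan is to mirror the proof of \autoref{lem:approximating-an-oracle}: I would argue that replacing $\tilde O_k$ by the finer oracle $\tilde O_{k+1}$ can only move probability mass \emph{onto} the two output symbols, never off them, so that both $\lambda_T^{\tilde O_{k+1}}(1\mid x)$ and $\lambda_T^{\tilde O_{k+1}}(0\mid x)$ dominate their $\tilde O_k$-counterparts. Concretely, I would set up a coupling of the two random executions $T^{\tilde O_k}$ and $T^{\tilde O_{k+1}}$, coupling both the internal coin flips of $T$ and the oracle-answer randomness, and show that on every joint outcome on which $T^{\tilde O_k}$ emits a symbol $a\in\{0,1\}$, the execution $T^{\tilde O_{k+1}}$ emits the same symbol $a$. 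Summing over outcomes then yields the two inequalities simultaneously.

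To build such a coupling I would compare the three ways in which $\tilde O_{k+1}$ is more permissive than $\tilde O_k$. First, $T^{\tilde O_{k+1}}$ runs $T$ for $k+1$ rather than $k$ steps, so no computation path that halts for $\tilde O_k$ is cut short for $\tilde O_{k+1}$. Second, $\tilde O_{k+1}$ answers the query $q_{k+1}$ on which $T^{\tilde O_k}$ would abort, so again paths are only extended. Both of these are immediate from the definition of $T^{\tilde O}$. The third and delicate point is that at each shared query $q_i$ with $i\le k$ I need the probability that $\tilde O_{k+1}$ returns $1$ to be at least the probability that $\tilde O_k$ returns $1$, and likewise for $0$; this is exactly what lets me nest the oracle-answer regions and couple the answers monotonically.

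The main obstacle is this last, per-query comparison. Passing from level $k$ to level $k+1$ changes two quantities at once: the reported value moves from $\tilde O_k(q_i)$ to $\tilde O_{k+1}(q_i)$, while the built-in discount shrinks from $2^{-k-1}$ to $2^{-k-2}$. I would invoke \autoref{def:extending-partial-oracles} to bound $|\tilde O_{k+1}(q_i)-\tilde O_k(q_i)|$, together with the fact that both oracles take values on the $2^{-k-1}$-grid, so that their difference is an integer multiple of $2^{-k-1}$. The crux is the arithmetic inequality asserting that the shrinkage of the discount dominates the change in the reported value, so that both $\tilde O_{k+1}(q_i)-2^{-k-2}$ and $1-\tilde O_{k+1}(q_i)-2^{-k-2}$ are weakly larger than the corresponding $\tilde O_k$-probabilities. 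This is the step I would scrutinize most carefully, since it requires the extending bound to line up with the gap between the two discount levels: if the reported value is allowed to move by as much as the full extending bound, one of the two return probabilities could a priori drop, and the argument must rule this out (by exploiting the grid-alignment, or failing that by a more global accounting of mass rather than a query-by-query comparison). Getting the constants to match, and separately handling the boundary cases where a return probability would be clamped at $0$, is where the real work sits; the remaining two sources of extra mass then follow directly from the definition of $T^{\tilde O}$.
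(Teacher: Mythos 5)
Your route is the same as the paper's: a per-query monotone comparison of the two executions, resting on exactly the three sources of monotonicity you list (one more computation step, one more answerable query, and the comparison of the answer probabilities at each shared query $q_i$). The paper's entire treatment of the third point is the one-line computation
\[
|\tilde O_{k+1}(q_i) - \tilde O_k(q_i)| \le 2^{-k-1}
\quad\Longrightarrow\quad
\tilde O_{k+1}(q_i) - 2^{-k-1} \ge \tilde O_k(q_i) - 2^{-k},
\]
i.e.\ the built-in discount shrinks by exactly $2^{-k-1}$ in passing from level $k$ to level $k+1$, which absorbs the maximal movement $2^{-k-1}$ of the reported value allowed by \autoref{def:extending-partial-oracles}; in the worst case the two return probabilities coincide, so monotonicity holds weakly. (The clamping at $0$ you mention is harmless, since $x \mapsto \max(0,x)$ is monotone.)

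The step you flagged as the crux is indeed the only nontrivial one, and your suspicion about the constants is warranted: grid alignment does not rescue it. The difference $\tilde O_{k+1}(q_i)-\tilde O_k(q_i)$ is a multiple of $2^{-k-1}$ of absolute value at most $2^{-k-1}$, so it can equal $-2^{-k-1}$ exactly. If you take the discount of a $k$-partial oracle literally as the $2^{-k-1}$ printed in the definition of $T^{\tilde O}$, the level-$(k{+}1)$ discount is $2^{-k-2}$, the shrinkage is only $2^{-k-2} < 2^{-k-1}$, and the return probability at a shared query can genuinely drop by $2^{-k-2}$ --- so under that reading the inequality fails. The displayed chain above (and hence the lemma's proof) goes through only when the discount at level $m$ is read as $2^{-m}$, or equivalently when the extension bound is tightened by one power of two, so that the discount decreases per level by at least the amount the reported value may move. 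In short: what your proposal is missing is not a further idea but this alignment of constants; once the discount and the extension bound are pinned down consistently, your coupling closes exactly as you describe, and it is the paper's argument.
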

\begin{proof}
$T^{\tilde O_{k+1}}$ runs for one more step than $T^{\tilde O_k}$,
can answer one more query and has increased oracle precision.
Moreover, since $\tilde O_{k+1}$ extends $\tilde O_k$,
we have $|\tilde O_{k+1}(q_i) - \tilde O_k(q_i)| \leq 2^{-k-1}$, and thus
$\tilde O_{k+1}(q_i) - 2^{-k-1} \geq \tilde O_k(q_i) - 2^{-k}$.
Therefore the success to answers to the oracle calls (case 2(a) and 2(b))
will not decrease in probability.
\end{proof}

Now everything is in place to state the algorithm
that constructs a reflective oracle in the limit.
It recursively traverses a tree of partial oracles.
The tree's nodes are the partial oracles;
level $k$ of the tree contains all $k$-partial oracles.
There is an edge in the tree from the $k$-partial oracle $\tilde O_k$ to
the $i$-partial oracle $\tilde O_i$ if and only if
$i = k + 1$ and $\tilde O_i$ extends $\tilde O_k$.

For every $k$, there are only finitely many $k$-partial oracles,
since they are functions from finite sets to finite sets.
In particular, there are exactly two $1$-partial oracles (so the search tree has two roots).
Pick one of them to start with, and proceed recursively as follows.
Given a $k$-partial oracle $\tilde O_k$,
there are finitely many $(k + 1)$-partial oracles that extend $\tilde O_k$
(finite branching of the tree).
Pick one that is $(k + 1)$-partially reflective
(which can be checked in finite time).
If there is no $(k + 1)$-partially reflective extension, backtrack.

By \autoref{lem:infinite-partial-oracles}
our search tree is infinitely deep and thus
the tree search does not terminate.
Moreover, it can backtrack to each level only a finite number of times
because at each level there is only a finite number of possible extensions.
Therefore the algorithm will produce an infinite sequence of partial oracles,
each extending the previous.
Because of finite backtracking, the output eventually stabilizes on a sequence
of partial oracles $\tilde O_1, \tilde O_2, \ldots$.
By the following lemma, this sequence converges to a reflective oracle,
which concludes the proof of \autoref{thm:lc-reflective-oracle}.

\begin{lemma}\label{lem:limit}
Let $\tilde O_1, \tilde O_2, \ldots$ be a sequence where
$\tilde O_k$ is a $k$-partially reflective $k$-partial oracle and
$\tilde O_{k+1}$ extends $\tilde O_k$ for all $k \in \mathbb{N}$.
Let $O := \lim_{k \to \infty} \tilde O_k$ be the pointwise limit.
Then
\begin{enumerate}[(a)]
\item $\lambda_T^{\tilde O_k}(1 \mid x) \to \lambda_T^O(1 \mid x)$ and
      $\lambda_T^{\tilde O_k}(0 \mid x) \to \lambda_T^O(0 \mid x)$ as $k \to \infty$
      for all $x \in \{ 0, 1 \}^*$ and all $T \in \mathcal{T}$, and
\item $O$ is a reflective oracle.
\end{enumerate}
\end{lemma}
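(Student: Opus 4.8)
The plan is to establish the two claims separately, using the limit definition of $O$ and the monotonicity lemmas already proved. I would first treat part (a), the convergence of the finite-oracle-induced probabilities to the limit-oracle-induced ones. The key observation is that by \autoref{lem:extending-oracles} the quantities $\lambda_T^{\tilde O_k}(1 \mid x)$ and $\lambda_T^{\tilde O_k}(0 \mid x)$ are nondecreasing in $k$, and they are bounded above by $1$, so each sequence converges to some limit. What remains is to identify that limit with $\lambda_T^O(1 \mid x)$ and $\lambda_T^O(0 \mid x)$. For this I would argue that running $T$ with the genuine limit oracle $O$ can be approximated arbitrarily well by running $T$ with $\tilde O_k$ for large $k$: any halting computation of $T^O$ that produces an output uses only finitely many steps and finitely many oracle queries, each of which is answered by $\tilde O_k$ with the correct value up to an error that shrinks as $k \to \infty$ (because $\tilde O_k \to O$ pointwise and the grid and $2^{-k-1}$ margins vanish). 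Thus every bit of probability mass that $T^O$ assigns to outputting $1$ (or $0$) is eventually captured by $T^{\tilde O_k}$, giving $\lim_k \lambda_T^{\tilde O_k}(a \mid x) \geq \lambda_T^O(a \mid x)$; the reverse inequality is immediate since restricting steps, queries, and precision can only lose mass. I expect the care here to lie in making the ``eventually captured'' argument rigorous via a continuity/limit exchange, handling the fact that the oracle answers are themselves random.

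For part (b) I would verify that the pointwise limit $O$ is an oracle (which is immediate, since each $O(q)$ is a limit of values in $[0,1]$) and then check the two reflectivity conditions of \autoref{def:reflective-oracle}. Suppose $\lambda_T^O(1 \mid x) > p$ for some query $q = (T, x, p)$. Pick $k$ large enough that $q$ is among the first $k$ queries and that, by part (a), $\lambda_T^{\tilde O_k}(1 \mid x) > p$ as well; this is possible precisely because of the convergence just established. Since $\tilde O_k$ is $k$-partially reflective, it follows that $\tilde O_k(T, x, p) = 1$. Now I would push this through the limit: because the $\tilde O_j$ for $j \geq k$ extend one another and each remains $k$-partially reflective on the query $q$ once the induced probability has exceeded $p$, the values $\tilde O_j(q)$ are pinned to $1$ for all large $j$, so $O(q) = \lim_j \tilde O_j(q) = 1$. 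The second condition, $\lambda_T^O(0 \mid x) > 1 - p \implies O(T, x, p) = 0$, is handled symmetrically.

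The main obstacle I anticipate is the limit-exchange in part (a): showing that the probabilities induced by the partial oracles actually converge \emph{to} the probability induced by the limit oracle, rather than merely to some lower bound. The subtlety is that $T^O$ may use oracle answers $O(q_i)$ that are irrational limits, and the partial oracles only approximate these on a coarsening grid with a shrinking but nonzero safety margin $2^{-k-1}$ subtracted in cases 2(a) and 2(b); I must confirm that this margin, together with the step and query cutoffs, does not prevent the partial-oracle mass from reaching the true mass in the limit. Once this convergence is secured, part (b) follows cleanly, since it only needs a strict inequality at the limit to be reflected at some finite stage, which is exactly what convergence to the correct value provides. I would also note that the monotonicity from \autoref{lem:extending-oracles} is what guarantees the limits in (a) exist in the first place and what lets me conclude that a value pinned to $1$ (or $0$) at stage $k$ stays pinned thereafter.
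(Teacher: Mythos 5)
Your proposal is correct and follows essentially the same route as the paper: monotonicity (via \autoref{lem:extending-oracles}) plus the upper bound from \autoref{lem:approximating-an-oracle} give convergence in (a), identification of the limit comes from the fact that any halting computation of $T^O$ uses finitely many steps and queries and is eventually reproduced by $T^{\tilde O_k}$, and (b) follows by transferring the strict inequality to some finite stage where $k$-partial reflectivity pins the value. The only difference is that you spell out the limit-identification step in (a) in more detail than the paper, which dispatches it with a one-line appeal to the definition of $T^O$.
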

\begin{proof}
First note that the pointwise limit must exists because
$|\tilde O_k(q_i) - \tilde O_{k+1}(q_i)| \leq 2^{-k-1}$
by \autoref{def:extending-partial-oracles}.
\begin{enumerate}[(a)]
\item Since $\tilde O_{k+1}$ extends $\tilde O_k$,
    each $\tilde O_k$ approximates $O$.
    Let $x \in \{ 0, 1 \}^*$ and $T \in \mathcal{T}$ and
    consider the sequence $a_k := \lambda_T^{\tilde O_k}(1 \mid x)$ for $k \in \mathbb{N}$.
    By \autoref{lem:extending-oracles},
    $a_k \leq a_{k+1}$, so the sequence is monotone increasing.
    By \autoref{lem:approximating-an-oracle},
    $a_k \leq \lambda_T^O(1 \mid x)$, so the sequence is bounded.
    Therefore it must converge.
    But it cannot converge to anything strictly below $\lambda_T^O(1 \mid x)$
    by the definition of $T^O$.
\item By definition, $O$ is an oracle; it remains to show that $O$ is reflective.
    Let $q_i = (T, x, p)$ be some query.
    If $p < \lambda_T^O(1 \mid x)$, then by (a)
    there is a $k$ large enough such that
    $p < \lambda_T^{\tilde O_t}(1 \mid x)$ for all $t \geq k$.
    For any $t \geq \max \{ k, i \}$,
    we have $\tilde O_t(T, x, p) = 1$
    since $\tilde O_t$ is $t$-partially reflective.
    Therefore $1 = \lim_{k \to \infty} \tilde O_k(T, x, p) = O(T, x, p)$.
    The case $1 - p < \lambda_T^O(0 \mid x)$ is analogous.
    \qedhere
\end{enumerate}
\end{proof}

\section{A Grain of Truth}
\label{sec:a-grain-of-truth}

\subsection{Notation}
\label{ssec:notation}

In reinforcement learning,
an agent interacts with an environment in cycles:
at time step $t$ the agent chooses an \emph{action} $a_t \in \A$ and
receives a \emph{percept} $e_t = (o_t, r_t) \in \E$
consisting of an \emph{observation} $o_t \in \O$
and a real-valued \emph{reward} $r_t \in \mathbb{R}$;
the cycle then repeats for $t + 1$.
A \emph{history} is an element of $\H$.
In this section,
we use $\ae \in \A \times \E$ to denote one interaction cycle,
and $\ae_{<t}$ to denote a history of length $t - 1$.

We fix a \emph{discount function}
$\gamma: \mathbb{N} \to \mathbb{R}$ with
$\gamma_t \geq 0$ and $\sum_{t=1}^\infty \gamma_t < \infty$.
The goal in reinforcement learning is
to maximize discounted rewards $\sum_{t=1}^\infty \gamma_t r_t$.
The \emph{discount normalization factor} is defined as
$\Gamma_t := \sum_{k=t}^\infty \gamma_k$.
The \emph{effective horizon} $H_t(\varepsilon)$ is a horizon
that is long enough to encompass all but an $\varepsilon$
of the discount function's mass:
\begin{equation}\label{eq:effective-horizon}
H_t(\varepsilon) := \min \{ k \mid \Gamma_{t+k} / \Gamma_t \leq \varepsilon \}
\end{equation}

A \emph{policy} is
a function $\pi: \H \to \Delta\A$
that maps a history $\ae_{<t}$ to
a distribution over actions taken after seeing this history.
The probability of taking action $a$ after history $\ae_{<t}$
is denoted with $\pi(a \mid \ae_{<t})$.
An \emph{environment} is a function $\nu: \H \times \A \to \Delta\E$
where $\nu(e \mid \ae_{<t}a_t)$ denotes
the probability of receiving the percept $e$
when taking the action $a_t$ after the history $\ae_{<t}$.
Together, a policy $\pi$ and an environment $\nu$ give rise to
a distribution $\nu^\pi$ over histories.
Throughout this paper, we make the following assumptions.

\begin{assumption}\label{ass:grl}
\begin{enumerate}[(a)]
\item \label{ass:bounded-rewards}
	Rewards are bounded between $0$ and $1$.
\item \label{ass:finite-actions-and-percepts}
	The set of actions $\A$ and the set of percepts $\E$
	are both finite.
\item \label{ass:gamma-computable}
	The discount function $\gamma$ and
	the discount normalization factor $\Gamma$ are computable.
\end{enumerate}
\end{assumption}

\begin{definition}[Value Function]
\label{def:discounted-value}
The \emph{value} of a policy $\pi$ in an environment $\nu$
given history $\ae_{<t}$ is defined recursively as
$V^\pi_\nu(\ae_{<t}) := \sum_{a \in \A} \pi(a \mid \ae_{<t}) V^\pi_\nu(\ae_{<t} a)$ and
\begin{align*}
  V^\pi_\nu&(\ae_{<t} a_t)
  := \\ &\frac{1}{\Gamma_t} \sum_{e_t \in \E}
      \nu(e_t \mid \ae_{<t} a_t)
      \Big( \gamma_t r_t +
      \Gamma_{t+1} V^\pi_\nu(\ae_{1:t}) \Big)
\end{align*}
if $\Gamma_t > 0$ and $V^\pi_\nu(\ae_{<t} a_t) := 0$ if $\Gamma_t = 0$.
The \emph{optimal value} is defined as
$V^*_\nu(\ae_{<t}) := \sup_\pi V^\pi_\nu(\ae_{<t})$.
\end{definition}

\begin{definition}[Optimal Policy]
\label{def:optimal-policy}
A policy $\pi$ is \emph{optimal in environment $\nu$ ($\nu$-optimal)} iff
for all histories $\ae_{<t} \in \H$
the policy $\pi$ attains the optimal value:
$V^\pi_\nu(\ae_{<t}) = V^*_\nu(\ae_{<t})$.
\end{definition}

We assumed that
the discount function is summable,
rewards are bounded
(\assref{ass:bounded-rewards}), and
actions and percepts spaces are both finite
(\assref{ass:finite-actions-and-percepts}).
Therefore an optimal deterministic policy exists
for every environment~\cite[Thm.\ 10]{LH:2014discounting}.

\subsection{Reflective Bayesian Agents}
\label{ssec:reflective-Bayesian-agents}

Fix $O$ to be a reflective oracle.
From now on,
we assume that the action space $\A := \{ \alpha, \beta \}$ is binary.
We can treat computable measures over binary strings as environments:
the environment $\nu$ corresponding to
a probabilistic Turing machine $T \in \mathcal{T}$
is defined by
\[
   \nu(e_t \mid \ae_{<t} a_t)
:= \overline\lambda_T^O(y \mid x)
 = \prod_{i=1}^k \overline\lambda_T^O(y_i \mid x y_1 \ldots y_{i-1})
\]
where $y_{1:k}$ is a binary encoding of $e_t$ and
$x$ is a binary encoding of $\ae_{<t} a_t$.
The actions $a_{1:\infty}$ are only \emph{contextual},
and not part of the environment distribution.
We define
\[
   \nu(e_{<t} \mid a_{<t})
:= \prod_{k=1}^{t-1} \nu(e_k \mid \ae_{<k}).
\]

Let $T_1, T_2, \ldots$ be an enumeration of
all probabilistic Turing machines in $\mathcal{T}$.
We define the \emph{class of reflective environments}
\[
   \Mrefl^O
:= \left\{ \overline\lambda_{T_1}^O, \overline\lambda_{T_2}^O, \ldots \right\}.
\]
This is the class of all environments computable on
a probabilistic Turing machine with reflective oracle $O$,
that have been completed from semimeasures to measures using $O$.

Analogously to AIXI~\cite{Hutter:2005},
we define a Bayesian mixture over the class $\Mrefl^O$.
Let $w \in \Delta\Mrefl^O$ be
a lower semicomputable prior probability distribution on $\Mrefl^O$.
Possible choices for the prior include the \emph{Solomonoff prior}
$w\big(\overline\lambda_T^O\big) := 2^{-K(T)}$, where $K(T)$ denotes
the length of the shortest input to some universal Turing machine that encodes $T$~\cite{Solomonoff:1978}.%
\footnote{%
Technically, the lower semicomputable prior $2^{-K(T)}$
is only a semidistribution because it does not sum to $1$.
This turns out to be unimportant.
}
We define the corresponding Bayesian mixture
\begin{equation}\label{eq:Bayes-mixture}
   \xi(e_t \mid \ae_{<t} a_t)
:= \sum_{\nu \in \Mrefl^O} w(\nu \mid \ae_{<t}) \nu(e_t \mid \ae_{<t} a_t)
\end{equation}
where $w(\nu \mid \ae_{<t})$ is the (renormalized) posterior,
\begin{equation}\label{eq:posterior}
   w(\nu \mid \ae_{<t})
:= w(\nu) \frac{\nu(e_{<t} \mid a_{<t})}{\overline\xi(e_{<t} \mid a_{<t})}.
\end{equation}
The mixture $\xi$ is lower semicomputable on an oracle Turing machine
because the posterior $w(\,\cdot \mid \ae_{<t})$ is lower semicomputable.
Hence there is an oracle machine $T$ such that $\xi = \lambda_T^O$.
We define its completion $\overline\xi := \overline\lambda_T^O$
as the completion of $\lambda_T^O$.
This is the distribution that is used to compute the posterior.
There are no cyclic dependencies since
$\overline\xi$ is called on the shorter history $\ae_{<t}$.
We arrive at the following statement.

\begin{proposition}[Bayes is in the Class]
\label{prop:Bayes-is-in-the-class}
$\overline\xi \in \Mrefl^O$.
\end{proposition}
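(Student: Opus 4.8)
The plan is to reduce the claim to a single semicomputability statement and then read off membership from the definition of the class. By construction $\Mrefl^O = \{ \overline\lambda_{T_1}^O, \overline\lambda_{T_2}^O, \ldots \}$ consists of the completions of all machines in $\mathcal{T}$, so it suffices to exhibit one machine $T \in \mathcal{T}$ with $\lambda_T^O = \xi$; since $\overline\xi$ was introduced precisely as the completion $\overline\lambda_T^O$ of such a $T$, the membership $\overline\xi \in \Mrefl^O$ then follows immediately. By the correspondence set up in \autoref{ssec:reflective-oracles-def} — machines in $\mathcal{T}$ run with $O$ realize exactly the semimeasures whose conditionals are lower semicomputable relative to $O$ — the existence of such a $T$ is equivalent to showing that both conditionals $\xi(1 \mid x)$ and $\xi(0 \mid x)$ are lower semicomputable on a machine with access to $O$.

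To verify this I would expand $\xi$ using \eqref{eq:Bayes-mixture} and \eqref{eq:posterior} and inspect each factor. The prior $w$ is lower semicomputable by assumption. Each likelihood $\nu(e_{<t} \mid a_{<t})$ with $\nu = \overline\lambda_{T_i}^O \in \Mrefl^O$ is reflective-oracle-computable (\autoref{def:reflective-oracle-computable}), being a product of completed conditionals extracted from $O$ by the binary search of \autoref{ssec:reflective-oracles-def}; moreover $\nu$ is a measure, so $\nu(0 \mid x) = 1 - \nu(1 \mid x)$ is computable as well. The only remaining factor is the normalizer $\overline\xi(e_{<t} \mid a_{<t})$ in the denominator of \eqref{eq:posterior}. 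Dividing the lower-semicomputable numerator $w(\nu)\,\nu(e_{<t} \mid a_{<t})$ by a normalizer that is computable (relative to $O$) and strictly positive yields a lower-semicomputable posterior $w(\nu \mid \ae_{<t})$, and a lower-semicomputable mixture of the computable likelihoods $\nu(\,\cdot \mid \ae_{<t} a_t)$ is again lower semicomputable; this delivers both conditionals of $\xi$, and hence the desired machine $T$.

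The crux, and what I expect to be the main obstacle, is the apparent circularity between $\xi$ and $\overline\xi$: the conditional of $\xi$ is defined through a posterior that refers to $\overline\xi$, while $\overline\xi$ is defined as the completion of the very machine computing $\xi$. I would discharge this by a joint induction on history length, using the fact already flagged after \eqref{eq:posterior} that the normalizer $\overline\xi(e_{<t} \mid a_{<t})$ is only ever evaluated on the strictly shorter history $\ae_{<t}$. Concretely, defining $\xi(\,\cdot \mid \ae_{<t} a_t)$ requires only the posteriors $w(\,\cdot \mid \ae_{<k})$ with $k < t$, each depending on $\overline\xi$ on histories of length $< k$; the recursion bottoms out at the empty history, where the normalizer equals $1$. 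This makes the definition well-founded and simultaneously supplies the induction hypothesis that $\overline\xi$ is a reflective-oracle-computable measure on shorter histories — exactly what was used to treat the denominator above. The one further subtlety is to confirm that the normalizer is accessible to sufficient numerical precision, i.e.\ computable rather than merely lower semicomputable: this follows because $\overline\xi$ is a measure, so its binary conditionals satisfy $\overline\xi(1 \mid x) = 1 - \overline\xi(0 \mid x)$, and lower semicomputability of both sides pins each value from below and above, making the division in \eqref{eq:posterior} preserve lower semicomputability.
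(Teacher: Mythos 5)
Your proposal is correct and follows essentially the same route as the paper: the paper's (very terse) argument is exactly that the posterior, and hence $\xi$, is lower semicomputable on an oracle machine, so $\xi = \lambda_T^O$ for some $T \in \mathcal{T}$ and $\overline\xi := \overline\lambda_T^O \in \Mrefl^O$, with the circularity dissolved because $\overline\xi$ is only ever evaluated on the strictly shorter history $\ae_{<t}$. Your induction on history length and the observation that the normalizer is computable (not merely lower semicomputable) relative to $O$ because $\overline\xi$ is a measure are just careful elaborations of the steps the paper leaves implicit.
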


Moreover, since $O$ is reflective,
we have that $\overline\xi$ dominates all environments $\nu \in \Mrefl^O$:
\begingroup
\allowdisplaybreaks
\begin{align*}
&\phantom{=}~\; \overline\xi(e_{1:t} \mid a_{1:t}) \\
&=    \overline\xi(e_t \mid \ae_{<t} a_t) \overline\xi(e_{<t} \mid a_{<t}) \\
&\geq \xi(e_t \mid \ae_{<t} a_t) \overline\xi(e_{<t} \mid a_{<t}) \\
&=    \overline\xi(e_{<t} \mid a_{<t})
      \sum_{\nu \in \Mrefl^O} w(\nu \mid \ae_{<t}) \nu(e_t \mid \ae_{<t} a_t) \\
&=    \overline\xi(e_{<t} \mid a_{<t}) \sum_{\nu \in \Mrefl^O} w(\nu) \frac{\nu(e_{<t} \mid a_{<t})}{\overline\xi(e_{<t} \mid a_{<t})} \nu(e_t \mid \ae_{<t} a_t) \\
&=    \sum_{\nu \in \Mrefl^O} w(\nu) \nu(e_{1:t} \mid a_{1:t}) \\
&\geq w(\nu) \nu(e_{1:t} \mid a_{1:t})
\end{align*}
\endgroup
This property is crucial for on-policy value convergence.

\begin{lemma}[{On-Policy Value Convergence~\cite[Thm.~5.36]{Hutter:2005}}]
\label{lem:on-policy-value-convergence}
For any policy $\pi$ and any environment $\mu \in \Mrefl^O$ with $w(\mu) > 0$,
\[
V^\pi_\mu(\ae_{<t}) - V^\pi_{\overline\xi}(\ae_{<t}) \to 0
\text{ $\mu^\pi$-almost surely as $t \to \infty$}.
\]
\end{lemma}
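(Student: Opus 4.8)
The plan is to convert the multiplicative dominance $\overline\xi(e_{1:t}\mid a_{1:t}) \geq w(\mu)\mu(e_{1:t}\mid a_{1:t})$, established immediately before the statement, into convergence of the one-step predictions, and then propagate this into convergence of the values via the effective horizon. First I would note that since the fixed policy $\pi$ contributes identical factors $\pi(a_k\mid\ae_{<k})$ to both $\mu^\pi$ and $\overline\xi^\pi$, these cancel in the likelihood ratio, so $\mu^\pi(\ae_{1:t})/\overline\xi^\pi(\ae_{1:t}) = \mu(e_{1:t}\mid a_{1:t})/\overline\xi(e_{1:t}\mid a_{1:t}) \leq 1/w(\mu)$ by dominance (both $\mu$ and $\overline\xi$ being genuine measures). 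Taking logarithms and expectations under $\mu^\pi$ and applying the chain rule, $\mathbb{E}_{\mu^\pi}[\ln(\mu/\overline\xi)]$ rewrites as a cumulative sum of expected conditional divergences $\sum_{k=1}^{t}\mathbb{E}_{\mu^\pi}[\mathrm{KL}_k]$, where $\mathrm{KL}_k := \mathrm{KL}(\mu(\cdot\mid\ae_{<k}a_k)\,\|\,\overline\xi(\cdot\mid\ae_{<k}a_k))$; hence this sum is bounded by $\ln(1/w(\mu)) < \infty$ uniformly in $t$.

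Second, since the terms are nonnegative, monotone convergence yields $\sum_{k=1}^\infty \mathrm{KL}_k < \infty$ $\mu^\pi$-almost surely, so the realized tail $\sum_{k\ge t}\mathrm{KL}_k$ vanishes as $t\to\infty$, and by Pinsker's inequality the total-variation distance between the $\mu$- and $\overline\xi$-predictions at each step shrinks accordingly.

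Third, I would turn this predictive merging into value convergence. Writing the value as the normalized expected discounted return and setting $H := H_t(\varepsilon)$, the return restricted to the window $[t,t+H]$ is a $[0,1]$-valued functional, while the tail beyond the horizon contributes at most $\varepsilon$ to each of $V^\pi_\mu(\ae_{<t})$ and $V^\pi_{\overline\xi}(\ae_{<t})$ by definition of $H_t(\varepsilon)$. For the truncated functional the value difference is bounded by the total-variation distance between the finite-horizon predictive distributions $\mu^\pi(\ae_{t:t+H}\mid\ae_{<t})$ and $\overline\xi^\pi(\ae_{t:t+H}\mid\ae_{<t})$, which by Pinsker is controlled by $\mathbb{E}_{\mu^\pi}\big[\sum_{k=t}^{t+H}\mathrm{KL}_k \;\big|\; \ae_{<t}\big]$; letting $\varepsilon\to 0$ then closes the argument, provided this conditional expected windowed divergence goes to zero almost surely.

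The main obstacle is exactly this last point: the value at $\ae_{<t}$ is an expectation over the \emph{future}, so I need the conditionally expected tail divergence $\mathbb{E}_{\mu^\pi}[\sum_{k\ge t}\mathrm{KL}_k \mid \ae_{<t}]$ to vanish $\mu^\pi$-almost surely, which is stronger than the almost-sure vanishing of the realized tail obtained in the second step. I would resolve this with L\'evy's upward convergence theorem applied to the integrable variable $M := \sum_{k=1}^\infty \mathrm{KL}_k$: the partial sums $\sum_{k<t}\mathrm{KL}_k$ are $\sigma(\ae_{<t})$-measurable and converge almost surely to $M$, while $\mathbb{E}[M\mid\sigma(\ae_{<t})]\to M$ almost surely, so subtracting the two gives $\mathbb{E}_{\mu^\pi}[\sum_{k\ge t}\mathrm{KL}_k \mid \ae_{<t}]\to 0$ almost surely, as required. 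Equivalently, one could bypass the horizon bookkeeping and invoke the Blackwell--Dubins merging theorem directly: dominance yields $\mu^\pi \ll \overline\xi^\pi$, so the total-variation distance of the full future predictive distributions vanishes $\mu^\pi$-almost surely, and the value difference is bounded by this distance since the normalized return is a bounded functional.
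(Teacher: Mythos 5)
Your proof is correct and follows essentially the same route as the argument the paper relies on (the paper does not prove this lemma itself but cites Hutter's Thm.~5.36): the dominance $\overline\xi(e_{1:t}\mid a_{1:t}) \geq w(\mu)\,\mu(e_{1:t}\mid a_{1:t})$ derived immediately before the lemma bounds the cumulative expected conditional KL divergence by $\ln w(\mu)^{-1}$, and value convergence then follows through the effective horizon and Pinsker's inequality. You also correctly identify and close the one genuinely delicate point---that the value at $\ae_{<t}$ depends on the \emph{conditionally expected} tail divergence rather than the realized tail---via L\'evy's upward convergence theorem, which is exactly what is needed.
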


\subsection{Reflective-Oracle-Computable Policies}
\label{ssec:reflective-oracle-computable-policies}

This subsection is dedicated to the following result
that was previously stated but not proved in \cite[Alg.~6]{FST:2015}.
It contrasts results on
arbitrary semicomputable environments where optimal policies are
not limit computable~\cite[Sec.~4]{LH:2015computability}.

\begin{theorem}[Optimal Policies are Oracle Computable]
\label{thm:optimal-policies-are-oracle-computable}
For every $\nu \in \Mrefl^O$,
there is a $\nu$-optimal (stochastic) policy $\pi^*_\nu$
that is reflective-oracle-computable.
\end{theorem}

Note that even though deterministic optimal policies always exist,
those policies are typically not reflective-oracle-computable.

To prove \autoref{thm:optimal-policies-are-oracle-computable}
we need the following lemma.

\begin{lemma}[Reflective-Oracle-Computable Optimal Value Function]
\label{lem:optimal-value-reflective-oracle-computable}
For every environment $\nu \in \Mrefl^O$
the optimal value function $V^*_\nu$ is reflective-oracle-computable.
\end{lemma}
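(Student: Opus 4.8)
The plan is to approximate $V^*_\nu$ to arbitrary precision by a finite-horizon version computed with backward induction, and then argue that this computation can be carried out on a probabilistic Turing machine with access to the reflective oracle $O$. Since an optimal deterministic policy exists for $\nu$, the optimal value function satisfies the Bellman optimality equations
\[
V^*_\nu(\ae_{<t}) = \max_{a \in \A} V^*_\nu(\ae_{<t}a),
\qquad
V^*_\nu(\ae_{<t}a_t) = \frac{1}{\Gamma_t}\sum_{e_t\in\E}\nu(e_t \mid \ae_{<t}a_t)\bigl(\gamma_t r_t + \Gamma_{t+1}V^*_\nu(\ae_{1:t})\bigr),
\]
which follow from the recursive definition in \autoref{def:discounted-value} together with the fact that the supremum over policies is attained by placing all probability on a best next action.

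First I would bound the truncation error. Let $V^{*,m}_\nu$ denote the optimal value when all rewards beyond horizon $t+m$ are ignored; by dynamic programming this equals the $m$-fold iteration of the Bellman backup started from the terminal value $0$. Because rewards lie in $[0,1]$ (\assref{ass:bounded-rewards}) and the discount is summable, the omitted tail is controlled by the discount normalization, giving $0 \leq V^*_\nu(\ae_{<t}) - V^{*,m}_\nu(\ae_{<t}) \leq \Gamma_{t+m}/\Gamma_t$. Choosing $m := H_t(\varepsilon)$ from \eqref{eq:effective-horizon} makes this at most $\varepsilon$, so it suffices to compute $V^{*,m}_\nu$ to precision $\varepsilon$.

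Next, $V^{*,m}_\nu(\ae_{<t})$ is a finite expression: since $\A$ is binary and $\E$ is finite (\assref{ass:finite-actions-and-percepts}), the tree of histories extending $\ae_{<t}$ up to depth $m$ has finitely many nodes, and the backward induction at each node requires only one maximum over the two actions and one finite sum over $\E$. Its ingredients are all available on the oracle machine: the conditionals $\nu(e_t \mid \ae_{<t}a_t) = \overline\lambda_T^O(\,\cdot \mid \cdot\,)$ are reflective-oracle-computable by \autoref{def:reflective-oracle-computable}, hence approximable to any precision $\delta$ using $O$; the rewards $r_t$ are read directly off the finite percepts $e_t$; and $\gamma_t, \Gamma_t$ are computable (\assref{ass:gamma-computable}).

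The main work is controlling how the approximation error $\delta$ in the conditionals propagates through the recursion. Since every value lies in $[0,1]$, the coefficient $(\gamma_t r_t + \Gamma_{t+1}V)/\Gamma_t$ is bounded by $1$ and the factor $\Gamma_{t+1}/\Gamma_t \leq 1$ multiplying the recursive term prevents error amplification, while the maximum is $1$-Lipschitz; thus each level of the backward induction adds at most order $|\E|\delta$ to the error, so over $m$ levels the total error is at most $m|\E|\delta$ (up to the small slack from $\overline\lambda_T^O$ being only approximately a measure). Choosing $\delta$ small enough relative to the computable quantities $m$, $|\E|$, and $\varepsilon$ — which is possible since all of these are computable — yields an $\varepsilon$-approximation of $V^*_\nu(\ae_{<t})$ computed entirely with the oracle. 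The delicate point, which I expect to need the most care, is that $\overline\lambda_T^O$ arises as an expectation over the random outcomes of the oracle's binary search, so one must argue that these conditionals can be pinned down to a deterministic error bound rather than merely estimated in probability.
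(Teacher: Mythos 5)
Your proof takes essentially the same route as the paper: the paper writes $V^*_\nu$ as the limit in $m$ of a finite-horizon expectimax expression (which is exactly your backward-induction truncation $V^{*,m}_\nu$), bounds the tail by the computable quantity $\Gamma_{m+1}$, and concludes that one can enumerate rationals above and below $V^*_\nu$. Your error-propagation analysis and the flagged worry about pinning down the conditionals $\overline\lambda_T^O(\,\cdot \mid \cdot\,)$ to a deterministic error bound are details the paper compresses into the single sentence ``for a fixed $m$, all involved quantities are reflective-oracle-computable,'' so your version is a more explicit rendering of the same argument rather than a different one.
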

\begin{proof}
This proof follows the proof of \cite[Cor.~13]{LH:2015computability}.
We write the optimal value explicitly as
\begin{equation}\label{eq:V-explicit}
  V^*_\nu(\ae_{<t})
= \frac{1}{\Gamma_t} \lim_{m \to \infty} \expectimax{\ae_{t:m}}\;
    \sum_{k=t}^m \gamma_k r_k \prod_{i=t}^k \nu(e_i \mid \ae_{<i}),
\end{equation}
where $\expectimax{}$ denotes the expectimax operator:
\[
   \expectimax{\ae_{t:m}}
:= \max_{a_t \in \A} \sum_{e_t \in \E} \ldots \max_{a_m \in \A} \sum_{e_m \in \E}
\]
For a fixed $m$,
all involved quantities are reflective-oracle-computable.
Moreover, this quantity is monotone increasing in $m$ and
the tail sum from $m+1$ to $\infty$ is bounded by $\Gamma_{m+1}$
which is computable according to \assref{ass:gamma-computable}
and converges to $0$ as $m \to \infty$.
Therefore we can enumerate all rationals above and below $V^*_\nu$.
\end{proof}

\begin{proof}[Proof of \autoref{thm:optimal-policies-are-oracle-computable}]
According to \autoref{lem:optimal-value-reflective-oracle-computable}
the optimal value function $V^*_\nu$ is reflective-oracle-computable.
Hence there is a probabilistic Turing machine $T$ such that
\[
  \lambda_T^O(1 \mid \ae_{<t})
= \big( V^*_\nu(\ae_{<t} \alpha) - V^*_\nu(\ae_{<t} \beta) + 1 \big) / 2.
\]
We define a policy $\pi$ that takes
action $\alpha$ if $O(T, \ae_{<t}, 1/2) = 1$ and
action $\beta$ if $O(T, \ae_{<t}, 1/2) = 0$.
(This policy is stochastic because the answer of the oracle $O$ is stochastic.)

It remains to show that $\pi$ is a $\nu$-optimal policy.
If $V^*_\nu(\ae_{<t} \alpha) > V^*_\nu(\ae_{<t} \beta)$,
then $\lambda_T^O(1 \mid \ae_{<t}) > 1/2$,
thus $O(T, \ae_{<t}, 1/2) = 1$ since $O$ is reflective,
and hence $\pi$ takes action $\alpha$.
Conversely, if $V^*_\nu(\ae_{<t} \alpha) < V^*_\nu(\ae_{<t} \beta)$,
then $\lambda_T^O(1 \mid \ae_{<t}) < 1/2$,
thus $O(T, \ae_{<t}, 1/2) = 0$ since $O$ is reflective,
and hence $\pi$ takes action $\beta$.
Lastly, if $V^*_\nu(\ae_{<t} \alpha) = V^*_\nu(\ae_{<t} \beta)$,
then both actions are optimal and
thus it does not matter which action is returned by policy $\pi$.
(This is the case where the oracle may randomize.)
\end{proof}

\subsection{Solution to the Grain of Truth Problem}
\label{ssec:solution-to-grain-of-truth-problem}

Together,
\autoref{prop:Bayes-is-in-the-class} and
\autoref{thm:optimal-policies-are-oracle-computable}
provide the necessary ingredients to solve the grain of truth problem.

\begin{corollary}[Solution to the Grain of Truth Problem]
\label{cor:solution-to-grain-of-truth}
For every lower semicomputable prior $w \in \Delta\Mrefl^O$
the Bayes-optimal policy $\pi^*_{\overline\xi}$ is reflective-oracle-computable
where $\xi$ is the Bayes-mixture corresponding to $w$
defined in \eqref{eq:Bayes-mixture}.
\end{corollary}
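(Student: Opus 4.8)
The plan is to obtain this corollary by directly composing the two preceding results, \autoref{prop:Bayes-is-in-the-class} and \autoref{thm:optimal-policies-are-oracle-computable}. The key observation is that the hypothesis that the prior $w$ is lower semicomputable is exactly what guarantees that the Bayes-mixture belongs to the reflective class: the posterior $w(\,\cdot \mid \ae_{<t})$ in \eqref{eq:posterior} is lower semicomputable, hence so is the mixture $\xi$ in \eqref{eq:Bayes-mixture}, so $\xi = \lambda_T^O$ for some oracle machine $T \in \mathcal{T}$, and its completion $\overline\xi = \overline\lambda_T^O$ lies in $\Mrefl^O$. This is precisely the content of \autoref{prop:Bayes-is-in-the-class}.

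With $\overline\xi \in \Mrefl^O$ in hand, I would then apply \autoref{thm:optimal-policies-are-oracle-computable} to the particular environment $\nu := \overline\xi$. The theorem yields a reflective-oracle-computable $\overline\xi$-optimal policy, and since the Bayes-optimal policy is by definition the policy maximizing $V^\pi_{\overline\xi}$, this policy is exactly $\pi^*_{\overline\xi}$. Chaining these two facts completes the argument.

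Because the corollary is a straightforward consequence of results already established, there is no genuine technical obstacle; the only point worth stating explicitly is that the completion step folded into \autoref{prop:Bayes-is-in-the-class} is what prevents $\overline\xi$ from being merely a lower semicomputable semimeasure lying outside the class. I therefore expect the proof to be a one- or two-line combination of the two cited results, with the substantive work having already been carried out in establishing them.
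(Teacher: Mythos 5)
Your proposal is correct and follows exactly the paper's own proof, which simply cites \autoref{prop:Bayes-is-in-the-class} to place $\overline\xi$ in $\Mrefl^O$ and then applies \autoref{thm:optimal-policies-are-oracle-computable} with $\nu := \overline\xi$. The additional remarks about why lower semicomputability of $w$ matters are accurate but restate work already done in the proposition.
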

\begin{proof}
From \autoref{prop:Bayes-is-in-the-class}
and \autoref{thm:optimal-policies-are-oracle-computable}.
\end{proof}

Hence the environment class $\Mrefl^O$ contains
any reflective-oracle-computable modification
of the Bayes-optimal policy $\pi^*_{\overline\xi}$.
In particular,
this includes computable multi-agent environments
that contain other Bayesian agents over the class $\Mrefl^O$.
So any Bayesian agent over the class $\Mrefl^O$ has a grain of truth
even though the environment may contain other Bayesian agents
\emph{of equal power}.
We proceed to sketch the implications for multi-agent environments
in the next section.

\section{Multi-Agent Environments}
\label{sec:multi-agent-environments}

This section summarizes our results for multi-agent systems.
The proofs can be found in \cite{Leike:2016}.

\subsection{Setup}
\label{ssec:multi-agent-setup}

In a \emph{multi-agent environment}
there are $n$ agents
each taking sequential actions from the finite action space $\A$.
In each time step $t = 1, 2, \ldots$,
the environment receives action $a_t^i$ from agent $i$ and outputs
$n$ percepts $e_t^1, \ldots, e_t^n \in \E$, one for each agent.
Each percept $e_t^i = (o_t^i, r_t^i)$ contains
an observation $o_t^i$ and a reward $r_t^i \in [0, 1]$.
Importantly, agent $i$ only sees
its own action $a_t^i$ and its own percept $e_t^i$
(see \autoref{fig:multi-agent-model}).
We use the shorthand notation $a_t := (a_t^1, \ldots, a_t^n)$ and
$e_t := (e_t^1, \ldots, e_t^n)$ and denote
$\ae_{<t}^i = a_1^i e_1^i \ldots a_{t-1}^i e_{t-1}^i$ and
$\ae_{<t} = a_1 e_1 \ldots a_{t-1} e_{t-1}$.

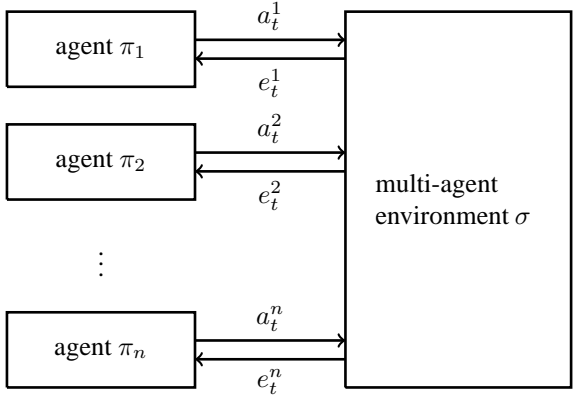
\begin{figure}[t]
\begin{center}
\begin{tikzpicture}[scale=0.25,line width=1pt] 
\draw (0,16) -- (10,16) -- (10,20) -- (0,20) -- (0,16);
\node at (5,18) {agent $\pi_1$};

\draw (0,10) -- (10,10) -- (10,14) -- (0,14) -- (0,10);
\node at (5,12) {agent $\pi_2$};

\node at (5,7) {\vdots};

\draw (0,0) -- (10,0) -- (10,4) -- (0,4) -- (0,0);
\node at (5,2) {agent $\pi_n$};

\draw (18,0) -- (30,0) -- (30,20) -- (18,20) -- (18,0);
\node at (24,10) {\begin{minipage}{22mm}
multi-agent \\ environment $\sigma$
\end{minipage}};

\draw[->] (10,18.5) to node[above] {$a_t^1$} (18,18.5);
\draw[<-] (10,17.5) to node[below] {$e_t^1$} (18,17.5);
\draw[->] (10,12.5) to node[above] {$a_t^2$} (18,12.5);
\draw[<-] (10,11.5) to node[below] {$e_t^2$} (18,11.5);
\draw[->] (10,2.5) to node[above] {$a_t^n$} (18,2.5);
\draw[<-] (10,1.5) to node[below] {$e_t^n$} (18,1.5);
\end{tikzpicture}
\end{center}
\caption[The multi-agent model]{%
Agents $\pi_1, \ldots, \pi_n$ interacting in a multi-agent environment.
}
\label{fig:multi-agent-model}
\end{figure}

We define a multi-agent environment as a function
\[
\sigma: (\A^n \times \E^n)^* \times \A^n \to \Delta(\E^n).
\]
The agents are given by $n$ policies $\pi_1, \ldots, \pi_n$ where
$\pi_i: (\A \times \E)^* \to \Delta \A$.
Together they specify the \emph{history distribution}
\begin{align*}
    \sigma^{\pi_{1:n}}(\epsilon) :&= 1 \\
    \sigma^{\pi_{1:n}}(\ae_{1:t})
:&= \sigma^{\pi_{1:n}}(\ae_{<t} a_t) \sigma(e_t \mid \ae_{<t} a_t) \\
    \sigma^{\pi_{1:n}}(\ae_{<t} a_t)
:&= \sigma^{\pi_{1:n}}(\ae_{<t}) \prod_{i=1}^n \pi_i(a_t^i \mid \ae_{<t}^i).
\end{align*}
Each agent $i$ acts in a \emph{subjective environment} $\sigma_i$
given by joining
the multi-agent environment $\sigma$
with the policies $\pi_1, \ldots, \pi_{i-1}, \pi_{i+1}, \ldots, \pi_n$
by marginalizing over the histories that $\pi_i$ does not see.
Together with policy $\pi_i$,
the environment $\sigma_i$ yields a distribution over the histories of agent $i$
\[
   \sigma_i^{\pi_i}(\ae_{<t}^i)
:= \sum_{\ae_{<t}^j, j \neq i} \sigma^{\pi_{1:n}}(\ae_{<t}).
\]
We get the definition of the subjective environment $\sigma_i$ with the identity
$\sigma_i(e_t^i \mid \ae_{<t}^i a_t^i)
:= \sigma_i^{\pi_i}(e_t^i \mid \ae_{<t}^i a_t^i)$.
It is crucial to note that
the subjective environment $\sigma_i$ and the policy $\pi_i$
are ordinary environments and policies,
so we can use the formalism from \autoref{sec:a-grain-of-truth}.

Our definition of a multi-agent environment is very general
and encompasses most of game theory.
It allows for cooperative, competitive, and mixed games;
infinitely repeated games or any (infinite-length) extensive form games
with finitely many players.

The policy $\pi_i$
is an \emph{$\varepsilon$-best response} after history $\ae_{<t}^i$ iff
\[
  V^*_{\sigma_i}(\ae_{<t}^i) - V^{\pi_i}_{\sigma_i}(\ae_{<t}^i)
< \varepsilon.
\]

If at some time step $t$,
all agents' policies are $\varepsilon$-best responses,
we have an \emph{$\varepsilon$-Nash equilibrium}.
The property of multi-agent systems that is analogous to
asymptotic optimality is convergence to an $\varepsilon$-Nash equilibrium.

\subsection{Informed Reflective Agents}
\label{ssec:informed-reflective-agents}

Let $\sigma$ be a multi-agent environment
and let $\pi^*_{\sigma_1}, \ldots \pi^*_{\sigma_n}$ be such that
for each $i$ the policy $\pi^*_{\sigma_i}$ is an optimal policy
in agent $i$'s subjective environment $\sigma_i$.
At first glance this seems ill-defined:
The subjective environment $\sigma_i$ depends on each other policy
$\pi^*_{\sigma_j}$ for $j \neq i$,
which depends on the subjective environment $\sigma_j$,
which in turn depends on the policy $\pi^*_{\sigma_i}$.
However, this circular definition actually has a well-defined solution.

\begin{theorem}[Optimal Multi-Agent Policies]
\label{thm:informed-reflective-agents}
For any reflective-oracle-computable multi-agent environment $\sigma$,
the optimal policies $\pi^*_{\sigma_1}, \ldots, \pi^*_{\sigma_n}$
exist and are reflective-oracle-computable.
\end{theorem}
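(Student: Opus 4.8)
The plan is to dissolve the apparent circularity by fixing a \emph{single} reflective oracle $O$ (which exists by \autoref{thm:existence-reflective-oracles}) and constructing all $n$ policies simultaneously as oracle machines that refer to one another through $O$. The self-consistency imposed in \autoref{def:reflective-oracle} is precisely the fixed-point device that makes the mutually dependent definition coherent: the circular chain ``$\sigma_i$ depends on $\pi^*_{\sigma_j}$, which depends on $V^*_{\sigma_j}$, which depends on $\sigma_j$, which depends on $\pi^*_{\sigma_i}$'' does not require any iterative or transfinite construction to resolve, because the oracle already answers queries about machines that query the oracle. Thus existence of the optimal policies will be inherited from existence of the oracle rather than argued separately.

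First I would invoke the simultaneous (systems) form of the recursion theorem~\cite[Thm.~27]{Kleene:1952} to produce $n$ oracle machines $T_1, \ldots, T_n$, each with access to the source code of all $n$ machines including itself. Each $T_i$ is designed to emulate the single-agent construction in the proof of \autoref{thm:optimal-policies-are-oracle-computable}: on input $\ae_{<t}^i$ it outputs $1$ with probability $\big(V^*_{\sigma_i}(\ae_{<t}^i \alpha) - V^*_{\sigma_i}(\ae_{<t}^i \beta) + 1\big)/2$, and the associated policy $\pi^*_{\sigma_i}$ plays $\alpha$ when $O(T_i, \ae_{<t}^i, 1/2) = 1$ and $\beta$ otherwise.

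The key move, and the step that averts an infinite regress, is that whenever $T_i$ needs another agent's action distribution $\pi^*_{\sigma_j}(\cdot \mid h^j)$ with $j \neq i$ while evaluating its subjective environment $\sigma_i$, it does not \emph{simulate} $T_j$ but issues the single query $O(T_j, h^j, 1/2)$. Because $\sigma_i$ is independent of agent $i$'s own policy, $T_i$ never needs to query itself; and because $\sigma$ is reflective-oracle-computable by hypothesis and each $\pi^*_{\sigma_j}$ is obtained from one oracle call, the marginalization defining $\sigma_i$ is a finite sum of reflective-oracle-computable quantities, so $\sigma_i \in \Mrefl^O$. Then \autoref{lem:optimal-value-reflective-oracle-computable} guarantees that $V^*_{\sigma_i}$ is reflective-oracle-computable, so $T_i$ is a legitimate oracle machine and $\lambda_{T_i}^O(1 \mid \ae_{<t}^i)$ equals the intended value difference. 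This already establishes the reflective-oracle-computability half of the claim, since each $\pi^*_{\sigma_i}$ is computed by one oracle call on $T_i$.

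Finally I would replay the reflective argument of \autoref{thm:optimal-policies-are-oracle-computable} inside the multi-agent loop to verify optimality: by the reflective property, $O(T_j, h^j, 1/2) = 1$ whenever $\lambda_{T_j}^O(1 \mid h^j) > 1/2$, i.e.\ whenever $V^*_{\sigma_j}(h^j \alpha) > V^*_{\sigma_j}(h^j \beta)$, and symmetrically for $\beta$, while in the tie case the oracle may randomize but both actions are optimal; hence each $\pi^*_{\sigma_i}$ attains $V^*_{\sigma_i}$ in $\sigma_i$. The hard part will be conceptual rather than computational: one must argue carefully that the simultaneously quined machines are well-defined and that cutting the mutual recursion by oracle queries reproduces the \emph{genuine} subjective environments — that is, that the oracle's answers about each $T_j$ coincide with its true behavior $\lambda_{T_j}^O$ — so that all $n$ subjective environments $\sigma_1, \ldots, \sigma_n$ are globally consistent at once. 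This is exactly the work the reflective property performs, and once the self-reference is set up cleanly the remainder reduces to the single-agent case.
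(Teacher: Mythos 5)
The paper itself does not prove this theorem --- \autoref{sec:multi-agent-environments} defers all multi-agent proofs to \cite{Leike:2016} --- but your construction matches the argument the paper's machinery is clearly built for: quine the $n$ machines simultaneously, cut the mutual recursion with oracle calls, and let reflectivity close the loop exactly as in the proof of \autoref{thm:optimal-policies-are-oracle-computable}. The overall structure is sound, including the observations that the conditionals of $\sigma_i$ do not depend on $\pi_i$ itself and that well-definedness is a property of the fixed oracle rather than of any iterative evaluation scheme.

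One step needs repair as written. To evaluate $V^*_{\sigma_i}$ via the expectimax formula of \autoref{lem:optimal-value-reflective-oracle-computable}, the machine $T_i$ needs the \emph{numerical} conditionals $\sigma_i(e_t^i \mid \ae_{<t}^i a_t^i)$, and hence the numerical action probabilities $\pi^*_{\sigma_j}(\alpha \mid h^j) = \Pr\big[O(T_j, h^j, 1/2) = 1\big]$; a single query $O(T_j, h^j, 1/2)$ returns only a \emph{sample} from that distribution, not the probability. The fix is the same device the paper uses to define $\overline\lambda_T^O$: wrap the query in a machine $T_j'$ that outputs the oracle's answer, so that $\lambda_{T_j'}^O(1 \mid h^j)$ is exactly the desired probability, and recover it to any precision by binary search on $p$ in $O(T_j', h^j, p)$ --- reflectivity guarantees every run of the search lands within $2^{-k}$ of the true value after $k$ steps, even though the oracle may randomize at the crossover point. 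With that substitution the marginalization defining $\sigma_i$ is genuinely reflective-oracle-computable, \autoref{lem:optimal-value-reflective-oracle-computable} applies, and your optimality argument goes through unchanged.
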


Note the strength of \autoref{thm:informed-reflective-agents}:
each of the policies $\pi^*_{\sigma_i}$ is acting optimally
\emph{given the knowledge of everyone else's policies}.
Hence optimal policies play $0$-best responses by definition,
so if every agent is playing an optimal policy, we have a Nash equilibrium.
Moreover, this Nash equilibrium is also a \emph{subgame perfect} Nash equilibrium,
because each agent also acts optimally on the counterfactual histories
that do not end up being played.
In other words,
\autoref{thm:informed-reflective-agents}
states the existence and reflective-oracle-computability
of a subgame perfect Nash equilibrium
in any reflective-oracle-computable multi-agent environment.
From \autoref{thm:lc-reflective-oracle} we then get that
these subgame perfect Nash equilibria are limit computable.

\begin{corollary}[Solution to Computable Multi-Agent Environments]
\label{cor:optimal-multi-agent-policies}
For any computable multi-agent environment $\sigma$,
the optimal policies $\pi^*_{\sigma_1}, \ldots, \pi^*_{\sigma_n}$
exist and are limit computable.
\end{corollary}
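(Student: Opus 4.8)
The plan is to combine \autoref{thm:informed-reflective-agents} with \autoref{thm:lc-reflective-oracle}, in direct analogy to how the single-agent \autoref{cor:solution-to-grain-of-truth} followed from its two ingredients. First I would note that every computable multi-agent environment $\sigma$ is in particular reflective-oracle-computable: an oracle machine may simply ignore its oracle, so computability entails reflective-oracle-computability with respect to any reflective oracle. I then fix $O$ to be the limit computable reflective oracle supplied by \autoref{thm:lc-reflective-oracle} and invoke \autoref{thm:informed-reflective-agents}, which already delivers the \emph{existence} half of the claim and moreover gives that each optimal policy $\pi^*_{\sigma_i}$ is computed by a probabilistic Turing machine $T_i \in \mathcal{T}$ with access to this particular $O$.

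The only substantive work left is to upgrade ``reflective-oracle-computable with a limit computable oracle'' to ``limit computable.'' Since a policy outputs a genuine element of $\Delta\A$, I take $T_i$ to halt with an action on every history, so in the binary case $\pi^*_{\sigma_i}(\alpha \mid \ae_{<t}^i) = \lambda_{T_i}^O(\alpha \mid \ae_{<t}^i)$ with no completion needed (for larger finite $\A$ one chains the same argument bitwise along a binary encoding of the action). To approximate this real computably I would use the stabilizing sequence of partial oracles $\tilde O_1, \tilde O_2, \ldots$ built in the proof of \autoref{thm:lc-reflective-oracle}, whose pointwise limit is $O$: by \autoref{lem:limit}(a) we have $\lambda_{T_i}^{\tilde O_k}(\alpha \mid \ae_{<t}^i) \to \lambda_{T_i}^O(\alpha \mid \ae_{<t}^i)$, and each $\lambda_{T_i}^{\tilde O_k}$ is evaluable in finite time by running $T_i$ for $k$ steps against $\tilde O_k$. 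Because $T_i$ halts, the complementary mass $\lambda_{T_i}^{\tilde O_k}(\beta \mid \ae_{<t}^i)$ supplies a matching bound from the opposite side, so the convergence is genuinely two-sided and the limit real is limit computable.

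The hard part will be the double limit concealed in this argument. The tree search of \autoref{thm:lc-reflective-oracle} does not hand us the true partial oracles $\tilde O_k$ at any finite stage: its current guess at level $k$ may still be overturned by later backtracking, so each $\tilde O_k$ is itself only the limit of the algorithm's finite-time outputs, while $\lambda_{T_i}^O$ is a further limit over $k$. I would collapse the two limits into one by diagonalizing --- run the tree search for $n$ steps, read off the partial oracle along the current path at its current depth, and evaluate $\lambda_{T_i}$ against it --- and then argue that this diagonal sequence genuinely converges (rather than merely having convergent subsequences). The crux is that on any \emph{fixed} query the search stabilizes after finitely many backtracks, thanks to the finite branching at each level, so for each fixed history the approximations are eventually computed against a partial oracle agreeing with $O$ to arbitrary precision; feeding this into the monotone convergence of \autoref{lem:limit}(a) yields the required single-limit approximation, and hence limit computability of every $\pi^*_{\sigma_i}$.
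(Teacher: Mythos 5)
Your proposal is correct and follows essentially the same route as the paper, which obtains this corollary by combining \autoref{thm:informed-reflective-agents} with \autoref{thm:lc-reflective-oracle} (the paper defers the details to its long version, noting only that limit computability of the equilibrium policies follows from the limit computable oracle). Your additional care about the double limit and the stabilization of the tree search is exactly the content already packaged into the statement of \autoref{thm:lc-reflective-oracle}, so it is a legitimate, if more explicit, unfolding of the same argument.
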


\subsection{Learning Reflective Agents}
\label{ssec:learning-reflective-agents}

Since our class $\Mrefl^O$ solves the grain of truth problem,
the result by Kalai and Lehrer~\cite{KL:1993} immediately implies that
for any Bayesian agents $\pi_1, \ldots, \pi_n$
interacting in an infinitely repeated game and
for all $\varepsilon > 0$ and all $i \in \{ 1, \ldots, n \}$
there is almost surely a $t_0 \in \mathbb{N}$ such that for all $t \geq t_0$
the policy $\pi_i$ is an $\varepsilon$-best response.
However, this hinges on the important fact that
every agent has to know the game and
also that all other agents are Bayesian agents.
Otherwise the convergence to an $\varepsilon$-Nash equilibrium may fail,
as illustrated by the following example.

At the core of the following construction is a \emph{dogmatic prior}%
~\cite[Sec.~3.2]{LH:2015priors}.
A dogmatic prior assigns very high probability
to going to hell (reward $0$ forever)
if the agent deviates from a given computable policy $\pi$.
For a Bayesian agent it is thus only worth deviating from the policy $\pi$
if the agent thinks that the prospects of following $\pi$ are very poor already.
This implies that
for general multi-agent environments and
without additional assumptions on the prior,
we cannot prove any meaningful convergence result about Bayesian agents
acting in an unknown multi-agent environment.

\begin{example}[Reflective Bayesians Playing Matching Pennies]
\label{ex:reflective-Bayesians-playing-matching-pennies}
In the game of \emph{matching pennies} there are two agents ($n = 2$),
and two actions $\A = \{ \alpha, \beta \}$
representing the two sides of a penny.
In each time step
agent $1$ wins if the two actions are identical and
agent $2$ wins if the two actions are different.
The payoff matrix is as follows.
\begin{center}
\begin{tabular}{l|cc}
         & $\alpha$ & $\beta$ \\
\hline
$\alpha$ & 1,0      & 0,1 \\
$\beta$  & 0,1      & 1,0
\end{tabular}
\end{center}
We use $\E = \{ 0, 1 \}$ to be the set of rewards
(observations are vacuous) and define the multi-agent environment $\sigma$
to give reward $1$ to agent $1$ iff $a_t^1 = a_t^2$ ($0$ otherwise) and
reward $1$ to agent $2$ iff $a_t^1 \neq a_t^2$ ($0$ otherwise).
Note that neither agent knows a priori that they are playing matching pennies,
nor that they are playing an infinite repeated game with one other player.

Let $\pi_1$ be the policy that takes the action sequence
$(\alpha \alpha \beta)^\infty$ and
let $\pi_2 := \pi_\alpha$ be the policy that always takes action $\alpha$.
The average reward of policy $\pi_1$ is $2/3$ and
the average reward of policy $\pi_2$ is $1/3$.
Let $\xi$ be a universal mixture \eqref{eq:Bayes-mixture}.
By \autoref{lem:on-policy-value-convergence},
$V^{\pi_1}_{\overline\xi} \to c_1 \approx 2/3$ and
$V^{\pi_2}_{\overline\xi} \to c_2 \approx 1/3$
almost surely
when following policies $(\pi_1, \pi_2)$.
Therefore there is an $\varepsilon > 0$ such that
$V^{\pi_1}_{\overline\xi} > \varepsilon$ and
$V^{\pi_2}_{\overline\xi} > \varepsilon$
for all time steps.
Now we can apply \cite[Thm.~7]{LH:2015priors} to conclude that
there are (dogmatic) mixtures $\xi_1'$ and $\xi_2'$ such that
$\pi^*_{\xi_1'}$ always follows policy $\pi_1$ and
$\pi^*_{\xi_2'}$ always follows policy $\pi_2$.
This does not converge to a ($\varepsilon$-)Nash equilibrium.
\end{example}

A policy $\pi$ is
\emph{asymptotically optimal in mean in an environment class $\M$}
iff for all $\mu \in \M$
\begin{equation}\label{eq:asymptotic-optimality}
\EE_\mu^\pi \big[ V^*_\mu(\ae_{<t}) - V^\pi_\mu(\ae_{<t}) \big] \to 0
\text{ as $t \to \infty$}
\end{equation}
where $\EE_\mu^\pi$ denotes the expectation with respect to
the probability distribution $\mu^\pi$ over histories
generated by policy $\pi$ acting in environment $\mu$.

Asymptotic optimality stands out because it is currently the only known
nontrivial objective notion of optimality in general reinforcement learning%
~\cite{LH:2015priors}.

The following theorem is the main convergence result.
It states that
for asymptotically optimal agents
we get convergence to $\varepsilon$-Nash equilibria
in any reflective-oracle-computable multi-agent environment.

\begin{theorem}[Convergence to Equilibrium]
\label{thm:convergence-to-equilibrium}
Let $\sigma$ be an reflective-oracle-computable multi-agent environment and
let $\pi_1, \ldots, \pi_n$ be reflective-oracle-computable policies
that are asymptotically optimal in mean in the class $\Mrefl^O$.
Then for all $\varepsilon > 0$ and all $i \in \{ 1, \ldots, n \}$
the $\sigma^{\pi_{1:n}}$-probability that
the policy $\pi_i$ is an $\varepsilon$-best response
converges to $1$ as $t \to \infty$.
\end{theorem}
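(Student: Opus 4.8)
The plan is to reduce this multi-agent statement to the single-agent asymptotic-optimality hypothesis by passing to each agent's subjective environment. The pivotal observation is that for every $i$ the subjective environment $\sigma_i$ defined in \autoref{ssec:multi-agent-setup} is itself an element of the class $\Mrefl^O$. Once this is available, the theorem follows by applying the asymptotic-optimality assumption to $\sigma_i$ and then invoking Markov's inequality.

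First I would check that $\sigma_i \in \Mrefl^O$. The environment $\sigma_i$ arises from the reflective-oracle-computable multi-agent environment $\sigma$ by fixing the reflective-oracle-computable policies $\pi_j$ for $j \neq i$ and marginalizing over the interaction histories that agent $i$ never sees. For each fixed time $t$ this marginalization is a finite sum of products of oracle-computable quantities (values of $\sigma$ and of the policies $\pi_j$), and the conditional $\sigma_i(e_t^i \mid \ae_{<t}^i a_t^i)$ is a ratio of two such finite sums in which the factors coming from $\pi_i$ cancel, so that $\sigma_i$ is a genuine environment not depending on $\pi_i$. Since all ingredients are reflective-oracle-computable and $\sigma_i$ is a proper measure (a marginal of proper distributions), $\sigma_i$ equals its own completion $\overline\lambda_T^O$ for a suitable oracle machine $T$, and hence $\sigma_i \in \Mrefl^O$.

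With this in hand the argument is short. Applying the definition of asymptotic optimality in mean to $\mu := \sigma_i \in \Mrefl^O$ gives $\EE_{\sigma_i}^{\pi_i}\big[ V^*_{\sigma_i}(\ae_{<t}^i) - V^{\pi_i}_{\sigma_i}(\ae_{<t}^i) \big] \to 0$ as $t \to \infty$. By the definition of $\sigma_i^{\pi_i}$ as the marginal of the global history distribution $\sigma^{\pi_{1:n}}$, the expectation of any function of agent $i$'s local history $\ae_{<t}^i$ agrees under $\sigma_i^{\pi_i}$ and under $\sigma^{\pi_{1:n}}$, so the same limit holds with the expectation taken under $\sigma^{\pi_{1:n}}$. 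The random variable $X_t := V^*_{\sigma_i}(\ae_{<t}^i) - V^{\pi_i}_{\sigma_i}(\ae_{<t}^i)$ is nonnegative because $V^*_{\sigma_i}$ dominates $V^{\pi_i}_{\sigma_i}$, so Markov's inequality yields $\sigma^{\pi_{1:n}}(X_t \geq \varepsilon) \leq \EE_\sigma^{\pi_{1:n}}[X_t] / \varepsilon \to 0$. Thus the $\sigma^{\pi_{1:n}}$-probability that $\pi_i$ is an $\varepsilon$-best response, i.e.\ that $X_t < \varepsilon$, converges to $1$; since $i$ was arbitrary this proves the theorem.

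I expect the main obstacle to be the first step, establishing $\sigma_i \in \Mrefl^O$. One must verify carefully that forming the marginal and then the conditional preserves reflective-oracle-computability --- in particular that taking a ratio does not destroy it, which is exactly where completing semimeasures to measures via the oracle is needed --- and that the resulting object is a well-defined environment independent of $\pi_i$. The remaining two steps are routine: the marginalization identity is immediate from the definitions, and the passage from convergence in mean to convergence in probability is a one-line Markov estimate.
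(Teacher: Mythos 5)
Your proof is correct and follows essentially the same route as the paper's intended argument (the paper defers the proof to \cite{Leike:2016}, where exactly this reduction is carried out): show that agent $i$'s subjective environment satisfies $\sigma_i \in \Mrefl^O$, apply asymptotic optimality in mean with $\mu = \sigma_i$, and convert convergence in mean to convergence in probability via Markov's inequality. You also correctly identify and handle the one delicate point, namely that the conditional defining $\sigma_i$ is independent of $\pi_i$ (the $\pi_i$-factors cancel) and remains reflective-oracle-computable, so that $\sigma_i$ is a genuine member of the class.
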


In contrast to \autoref{thm:informed-reflective-agents}
which yields policies that play a subgame perfect equilibrium,
this is not the case for \autoref{thm:convergence-to-equilibrium}:
the agents typically do not learn to predict off-policy and
thus will generally not play $\varepsilon$-best responses
in the counterfactual histories that they never see.
This weaker form of equilibrium is unavoidable
if the agents do not know the environment because
it is impossible to learn the parts that they do not interact with.

Together with \autoref{thm:lc-reflective-oracle} and
the asymptotic optimality of the Thompson sampling policy%
~\cite[Thm.~4]{LLOH:2016Thompson} that is reflective-oracle computable
we get the following corollary.

\begin{corollary}[Convergence to Equilibrium]
\label{cor:convergence-to-equilibrium}
There are limit computable policies $\pi_1, \ldots, \pi_n$ such that
for any computable multi-agent environment $\sigma$ and
for all $\varepsilon > 0$ and all $i \in \{ 1, \ldots, n \}$
the $\sigma^{\pi_{1:n}}$-probability that
the policy $\pi_i$ is an $\varepsilon$-best response
converges to $1$ as $t \to \infty$.
\end{corollary}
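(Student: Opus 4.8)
The plan is to derive \autoref{cor:convergence-to-equilibrium} by instantiating the general convergence result \autoref{thm:convergence-to-equilibrium} with two concrete ingredients: the limit computable reflective oracle furnished by \autoref{thm:lc-reflective-oracle}, and the Thompson sampling policy, which by \cite[Thm.~4]{LLOH:2016Thompson} is reflective-oracle-computable and asymptotically optimal in mean over $\Mrefl^O$. The only content beyond citing these results is to verify that the hypotheses of \autoref{thm:convergence-to-equilibrium} are met for an arbitrary \emph{computable} (not merely reflective-oracle-computable) multi-agent environment, and to upgrade ``reflective-oracle-computable'' to ``limit computable''.

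Concretely, I would proceed in three steps. First, fix $O$ to be the limit computable reflective oracle of \autoref{thm:lc-reflective-oracle}, and for each $i$ let $\pi_i$ be the Thompson sampling policy over the class $\Mrefl^O$; these policies do not depend on the environment $\sigma$, and by \cite[Thm.~4]{LLOH:2016Thompson} they are reflective-oracle-computable and asymptotically optimal in mean in $\Mrefl^O$. Second, I would argue that, because $O$ is limit computable, each $\pi_i$ is itself limit computable: the conditional $\pi_i(a \mid \ae_{<t}^i)$ is the output probability of a probabilistic oracle machine, any finite-depth run of which issues only finitely many oracle queries, each approximable to arbitrary precision by the anytime procedure limit-computing $O$; collecting these finite-step, finite-precision approximations yields a computable $\phi$ converging to $\pi_i(a \mid \ae_{<t}^i)$. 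This is the same passage from a reflective-oracle-computable object to a limit computable one already invoked for \autoref{cor:optimal-multi-agent-policies}. Third, given any computable multi-agent environment $\sigma$, note that $\sigma$ is a fortiori reflective-oracle-computable, and that each subjective environment $\sigma_i$ obtained by joining $\sigma$ with the reflective-oracle-computable policies $\pi_j$ for $j \neq i$ is again reflective-oracle-computable, hence lies in $\Mrefl^O$. This is exactly the grain of truth property established in \autoref{cor:solution-to-grain-of-truth}: each agent's subjective environment belongs to the very class over which it performs Thompson sampling. With all hypotheses verified, \autoref{thm:convergence-to-equilibrium} applies to these $\pi_i$ and yields that the $\sigma^{\pi_{1:n}}$-probability that $\pi_i$ is an $\varepsilon$-best response converges to $1$, which is the claim.

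The main obstacle I expect is the limit-computability transfer of the second step, not the equilibrium analysis, which is entirely delegated to \autoref{thm:convergence-to-equilibrium}. The delicate point is that the Thompson sampler makes unboundedly many oracle calls over an interaction while the oracle is available only as a monotone limit of $k$-partial oracles, so one must ensure that the two approximation parameters---the number of simulation steps of the policy machine and the approximation index of the oracle---can be folded into a single computable sequence that still converges to the true conditional probability. Since any conditional is determined by finitely many steps up to arbitrarily small error, and each oracle answer used in those steps is itself a convergent limit, a standard interleaving argument closes this gap; the remaining checks, that computable environments and their subjective marginalizations stay inside $\Mrefl^O$, are routine bookkeeping.
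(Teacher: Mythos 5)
Your proposal is correct and follows essentially the same route as the paper, which obtains the corollary by combining \autoref{thm:convergence-to-equilibrium} with the limit computable reflective oracle of \autoref{thm:lc-reflective-oracle} and the reflective-oracle-computability plus asymptotic optimality in mean of Thompson sampling from \cite[Thm.~4]{LLOH:2016Thompson}; the paper states this in one sentence and defers details to \cite{Leike:2016}. Your additional care about interleaving the oracle-approximation index with the simulation depth is the right way to justify the passage from reflective-oracle-computable to limit computable, the same step already implicit in \autoref{cor:optimal-multi-agent-policies}.
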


\section{Discussion}
\label{sec:discussion}

This paper introduced
the class of all reflective-oracle-computable environments $\Mrefl^O$.
This class solves the grain of truth problem because
it contains (any computable modification of)
Bayesian agents defined over $\Mrefl^O$:
the optimal agents and Bayes-optimal agents
over the class are all reflective-oracle-computable~%
(\autoref{thm:optimal-policies-are-oracle-computable} and
\autoref{cor:solution-to-grain-of-truth}).

If the environment is unknown,
then a Bayesian agent may end up playing suboptimally~%
(\autoref{ex:reflective-Bayesians-playing-matching-pennies}).
However, if each agent uses a policy that is asymptotically optimal in mean
(such as the Thompson sampling policy~\cite{LLOH:2016Thompson})
then for every $\varepsilon > 0$
the agents converge to an $\varepsilon$-Nash equilibrium~%
(\autoref{thm:convergence-to-equilibrium} and
\autoref{cor:convergence-to-equilibrium}).

Our solution to the grain of truth problem is purely theoretical.
However, \autoref{thm:lc-reflective-oracle} shows that
our class $\Mrefl^O$ allows for computable approximations.
This suggests that practical approaches can be derived from this result,
and reflective oracles have already seen
applications in one-shot games~\cite{FTC:2015reflection}.

\subsubsection*{Acknowledgements}

We thank Marcus Hutter and Tom Everitt for valuable comments.


\bibliographystyle{alpha}
\bibliography{ai}

\cleardoublepage
\appendix
\section{List of Notation}
\label{app:notation}

\begin{xtabular*}{\linewidth}{lp{60mm}}
$:=$
	& defined to be equal \\
$\mathbb{N}$
	& the natural numbers, starting with $0$ \\
$\mathbb{Q}$
	& the rational numbers \\
$\mathbb{R}$
	& the real numbers \\
$t$
	& (current) time step, $t \in \mathbb{N}$ \\
$k, n, i$
	& time steps, natural numbers \\
$p$
	& a rational number \\
$\X^*$
	& the set of all finite strings over the alphabet $\X$ \\
$\X^\infty$
	& the set of all infinite strings over the alphabet $\X$ \\
$\X^\sharp$
	& the set of all finite and infinite strings over the alphabet $\X$ \\
$O$
	& a reflective oracle \\
$\tilde O$
	& a partial oracle \\
$q$
	& a query to a reflective oracle \\
$\mathcal{T}$
	& the set of all probabilistic Turing machines that can query an oracle \\
$T, T'$
	& probabilistic Turing machines that can query an oracle,
	$T, T' \in \mathcal{T}$ \\
$K(x)$
	& the Kolmogorov complexity of a string $x$ \\
$\lambda_T$
	& the semimeasure corresponding to the probabilistic Turing machine $T$ \\
$\lambda_T^O$
	& the semimeasure corresponding to the probabilistic Turing machine $T$
	with reflective oracle $O$ \\
$\overline\lambda_T^O$
	& the completion of $\lambda_T^O$
	into a measure using the reflective oracle $O$ \\
$\A$
	& the finite set of possible actions \\
$\O$
	& the finite set of possible observations \\
$\E$
	& the finite set of possible percepts,
	$\E \subset \O \times \mathbb{R}$ \\
$\alpha, \beta$
	& two different actions, $\alpha, \beta \in \A$ \\
$a_t$
	& the action in time step $t$ \\
$o_t$
	& the observation in time step $t$ \\
$r_t$
	& the reward in time step $t$, bounded between $0$ and $1$ \\
$e_t$
	& the percept in time step $t$, we use $e_t = (o_t, r_t)$ implicitly \\
$\ae_{<t}$
	& the first $t - 1$ interactions,
	$a_1 e_1 a_2 e_2 \ldots a_{t-1} e_{t-1}$
	(a history of length $t - 1$) \\
$\epsilon$
	& the empty string/the history of length $0$ \\
$\varepsilon$
	& a small positive real number \\
$\gamma$
	& the discount function $\gamma: \mathbb{N} \to \mathbb{R}_{\geq0}$ \\
$\Gamma_t$
	& a discount normalization factor,
	$\Gamma_t := \sum_{k=t}^\infty \gamma_k$ \\
$\nu, \mu$
	& environments/semimeasures \\
$\sigma$
	& multi-agent environment \\
$\sigma^{\pi_{1:n}}$
	& history distribution induced by policies $\pi_1, \ldots, \pi_n$
	  acting in the multi-agent environment $\sigma$ \\
$\sigma_i$
	& subjective environment of agent $i$ \\
$\pi$
	& a policy, $\pi: \H \to \A$ \\
$\pi^*_\nu$
	& an optimal policy for environment $\nu$ \\
$V^\pi_\nu$
	& the $\nu$-expected value of the policy $\pi$ \\
$V^*_\nu$
	& the optimal value in environment $\nu$ \\
$\M$
	& a countable class of environments \\
$\Mrefl^O$
	& the class of all reflective-oracle-computable environments \\
$w$
	& a universal prior, $w \in \Delta\Mrefl^O$ \\
$\xi$
	& the universal mixture over all environments $\Mrefl^O$, a semimeasure \\
$\overline\xi$
	& the completion of $\lambda_T^O$
	into a measure using the reflective oracle $O$ \\
\end{xtabular*}

\end{document}